\documentclass[3p,times]{elsarticle}

\usepackage{graphicx}\usepackage{multirow}\usepackage{amsmath,amssymb,amsfonts}\usepackage{amsthm}\usepackage{mathtools}\usepackage{xcolor}\usepackage{booktabs}\usepackage{algorithm}\usepackage{algorithmicx}\usepackage{algpseudocode}\usepackage{caption}\usepackage{subcaption}\usepackage{nicefrac}\usepackage{url}\usepackage[colorlinks=true,linkcolor=blue,citecolor=blue,urlcolor=blue]{hyperref}

\graphicspath{{images/}}
\DeclareGraphicsExtensions{.pdf,.png,.jpg}

\theoremstyle{plain}\newtheorem{theorem}{Theorem}\newtheorem{proposition}{Proposition}\newtheorem{lemma}{Lemma}\newtheorem{corollary}{Corollary}\newtheorem{assumption}{Assumption}

\theoremstyle{remark}\newtheorem{remark}{Remark}

\theoremstyle{definition}\newtheorem{definition}{Definition}

\newcommand{\coderepo}{\url{https://github.com/intsystems/nas-for-moe}}

\journal{Neurocomputing}

\begin{document}

\begin{frontmatter}

\title{Structure Aware Neural Architecture Search for Mixture of Experts}

\author{Petr Babkin\corref{cor1}}
\ead{p.k.babkin@gmail.com}
\author{Oleg Bakhteev}

\cortext[cor1]{Corresponding author.}

\begin{abstract}
Neural Architecture Search (NAS) has so far rarely been applied to Mixture-of-Experts (MoE) models, and existing MoE designs leave the alignment between experts and the structure of the data to emerge on its own. We propose an architecture search framework that makes this alignment an explicit search variable: the assignment of data clusters to experts is optimised jointly with the per-expert architectures. We cast the joint problem as a cluster-aware likelihood maximisation, show that it coincides with the incomplete-data maximum likelihood of a latent-variable mixture, and solve it by a generalised Expectation--Maximisation procedure whose otherwise intractable expert-quality term is supplied by an adaptively refined surrogate. We prove that the iterates converge whenever the surrogate errors are summable, and that at every limit point no candidate the search produces improves the true objective. On a heterogeneous image-classification mixture the method recovers the underlying domain partition on $95\%$ of clusters without ever observing domain labels, and on that benchmark and a four-domain time-series forecasting one alike it outperforms the MoE and NAS baselines that likewise use no label information.
\end{abstract}

\begin{keyword}
Mixture of Experts \sep Neural Architecture Search \sep Surrogate model \sep Expectation--Maximisation
\end{keyword}

\end{frontmatter}

\section{Introduction}
\label{sec:introduction}

The Mixture-of-Experts (MoE) architecture, in which a gating function routes each input to a small subset of specialized experts, has been well-established in the literature for decades \cite{jacobs1991adaptive, mu2025comprehensive} and is now a standard building block of large-scale deep models \cite{fedus2022switch, jiang2024mixtral}.
Neural Architecture Search (NAS) has in parallel matured into a wide field \cite{ren2021comprehensive, liu2018darts}, automating a design step that otherwise requires substantial domain expertise and manual tuning.
Despite the individual successes of both methodologies, the application of NAS techniques specifically to MoE architectures remains a relatively underexplored research area, as the survey of prior work in Section~\ref{sec:related_work} below shows, presenting significant opportunities for advancing both efficiency and performance in large-scale neural architectures.
In contrast to the classical approach where all experts share the same architecture \citep{fedus2022switch}, our study allows architectural heterogeneity between experts.
We claim that this flexibility enhances the overall efficiency of the resulting architecture by enabling more specialized and adaptable expert networks.

The MoE architecture is inherently well-suited for data with strong clustering properties, both in theory \cite{chen2022towards} and in practice on multimodal datasets \cite{chai2022mixture, guo2018multi, hu2025multi}.
However, despite these promising results, the underlying mechanisms that enable experts to align with distinct modalities remain insufficiently understood, and existing methods leave that alignment to emerge on its own rather than controlling it explicitly.

Given the foregoing, we formulate the following research question:
\textit{how can we design conditions under which expert architectures adapt to specific data clusters?}
In this study, we aim to address this question by examining how models evolve during adaptation to specific data modalities.

\vspace{2mm}

\textbf{Contributions}

\vspace{2mm}

\begin{itemize}
    \item \textbf{Problem formulation.} We propose a new formulation of architecture search for Mixture-of-Experts models as a cluster-aware likelihood-maximisation problem, and give a theoretical grounding for it by showing its equivalence to the incomplete-data maximum likelihood of a latent-variable mixture.
    \item \textbf{Algorithm.} We propose an algorithm that solves the stated problem, based on the Expectation--Maximisation (EM) algorithm with an adaptively refined surrogate model.
    \item \textbf{Convergence guarantee.} We prove a convergence theorem for the proposed algorithm.
    \item \textbf{Benchmarks and experiments.} We construct datasets on which mixture-of-experts approaches have a clear advantage, and conduct comparative experiments against several baselines on them.
\end{itemize}

The source code of the proposed method and of all experiments is
publicly available at \coderepo.

\subsection{Related Work}
\label{sec:related_work}

\noindent

\vspace{2mm}

\textbf{Mixture-of-Experts.}
The Mixture-of-Experts (MoE) framework was originally introduced
in~\cite{jacobs1991adaptive,jordan1994hierarchical,jacobs1997bayesian}
as a way to combine specialised statistical models through a learnable
gating mechanism. Subsequent theoretical analyses characterised
identifiability and convergence in classical MoE
formulations~\cite{nguyen2023general,nguyen2023demystifying,piwko2025divide,rasmussen2001infinite}.
With the rise of deep learning, MoE has been revived in the form of
sparsely activated neural networks, where the gate routes each input
to only a small subset of
experts~\cite{eigen2013learning,shazeer2017outrageously,riquelme2021scaling,fedus2022switch}.
This sparse formulation enables training of models with hundreds of
billions of parameters while keeping per-token compute roughly
constant, and is now a standard building block of modern large-scale
transformers~\cite{jiang2024mixtral,dai2024deepseekmoe,qiu2025demons}.

\vspace{2mm}

\textbf{Domain specialisation of experts.}
A complementary line of work constructs experts that specialise to
data domains \emph{explicitly}. DEMix layers~\cite{gururangan2022demix}
condition expert selection on document-level domain metadata;
Branch-Train-Merge~\cite{li2022branch} trains independent expert
language models per domain and observes that ensembles built from
random data splits perform poorly; c-BTM~\cite{gururangan2023scaling}
removes the need for metadata by discovering domains via $k$-means
clustering of corpus embeddings and dedicating one expert to each
cluster; Branch-Train-MiX~\cite{sukhbaatar2024branch} subsequently
merges such domain experts into a single sparse MoE layer with a
learned router. In all of these methods the cluster-to-expert mapping
is fixed a priori and all experts share the same architecture, whereas
we treat both the mapping and the per-expert architectures as
optimisation variables. The need for such an explicit mechanism is
supported by analyses of trained sparse MoE models: the routers of
Mixtral exhibit no topic-level expert
specialisation~\cite{jiang2024mixtral}, and OLMoE finds that domain
specialisation emerges only partially --- and in an uncontrolled
manner --- when training from scratch~\cite{muennighoff2025olmoe}.
Sparse MoE has also been adopted by time-series foundation models such
as Time-MoE~\cite{shi2025timemoe} and
Moirai-MoE~\cite{liu2025moiraimoe}; the latter routes tokens through
$k$-means centroids of pretrained representations, which is close in
spirit to our cluster-level formulation, but neither model searches
expert architectures or optimises the cluster-level assignment itself.

\vspace{2mm}

\textbf{Neural Architecture Search.}
Neural Architecture Search (NAS) automates the design of neural
networks and has matured into a wide research
field~\cite{ren2021comprehensive}. Many-shot approaches based on
reinforcement learning~\cite{zoph2016neural,baker2016designing,zoph2018learning}
or evolutionary search~\cite{real2019regularized,yang2020cars}
explicitly train candidate architectures and select the best
performer. One-shot methods reduce the search cost by training a
single supernet that contains all candidate architectures as
sub-networks~\cite{liu2018darts,chen2020stabilizing,you2020greedynas,nayman2019xnas}.
DARTS~\cite{liu2018darts} is the most prominent representative of this
family and forms one of the baselines we compare against in our
experimental study.

\vspace{2mm}

\textbf{Neural Architecture Search for Mixture-of-Experts.}
The intersection of NAS and MoE is a comparatively recent and still
sparsely populated research area. AutoMoE~\cite{jawahar2022automoe}
applies an evolutionary search on top of a shared supernet to obtain
heterogeneous MoE architectures for neural machine translation;
Brainformers~\cite{zhou2023brainformers} search over block types
including sparse-FFN layers at foundation-model scale; and a recent
automated pipeline search explores heterogeneous 4-expert
mixtures~\cite{lukhi2026heterogeneous}.
CMN~\cite{han2024cmn} co-designs MoE topologies for
computing-in-memory hardware. These methods operate at a scale and in
domains (large-scale NMT and language modelling on shared supernets)
that are not directly comparable to our regime; we therefore cite them
as the closest NAS-for-MoE prior work but do not reproduce them, and
instead include an AutoMoE-style baseline that searches heterogeneous
per-expert architectures under a conventional gate --- isolating the
contribution of jointly optimising the cluster-to-expert assignment,
which none of the above treat as a search variable. The work most closely related to ours
is MoE-NAS~\cite{mecharbat2025moenas}, which optimises the number of
experts per layer but keeps the expert architectures shared. In
contrast, we explicitly search over per-expert architectures and tie
this search to the cluster structure of the input data.

\vspace{2mm}

\textbf{Theoretical Understanding.}
A growing body of theoretical work studies why MoE models perform
well on data with strong cluster or modality structure.
\cite{chen2022towards} proves on simplified settings that MoE
provably benefits from clusterable data, while~\cite{li2024theory}
analyses the optimisation landscape of sparse gates. Empirical
evidence from multimodal and multi-task
learning~\cite{chai2022mixture,guo2018multi,hu2025multi,zhou2022mixture}
further supports the claim that experts naturally specialise to
distinct sub-distributions. Our work builds on this perspective and
turns it into an explicit search objective by optimising over
\emph{per-cluster} expert architectures rather than treating the
cluster structure as a by-product of training.

\vspace{2mm}
 
\section{Problem Statement}
\label{sec:problem_statement}

This section formalises what exactly is being searched for. We first
define the search space --- the discrete space of per-expert
architectures together with the cluster-level routing that jointly
specify an MoE model (Section~\ref{subsec:searchspace}) --- and then
cast the search over this space as a cluster-aware
likelihood-maximisation problem (Section~\ref{subsec:objective}).

\subsection{Search Space of MoE}
\label{subsec:searchspace}

We adopt the \emph{cell-based} search space that is standard in the NAS
literature~\cite{zoph2018learning, liu2018darts}: instead of searching
over entire networks, one searches over a single small building block
--- a \emph{cell} --- and a network is obtained by stacking several
copies of that block inside a fixed skeleton (a stem, a reduction stage
and a task-specific head). Each expert is one such network, so searching
for an expert amounts to searching for its cell.

A cell is a directed acyclic graph whose nodes are computation steps. It
has a fixed number
of nodes and each node applies a single operation, drawn from a fixed
finite set $\mathcal{O}$ of primitives appropriate to the data modality
(the concrete sets used in our experiments are given in
Section~\ref{sec:experiments}), to the output of one preceding node;
the first node reads the cell input. An architecture is therefore
determined by two discrete choices per node: \emph{which} operation the
node applies, and \emph{from which} earlier node it takes its input. We
gather all these choices into a single architecture vector
$\boldsymbol{\alpha}$. It describes one cell and, through the fixed
skeleton, the whole expert network: all copies of the cell within an
expert share the same $\boldsymbol{\alpha}$, while their weights are
trained independently.
Since both
$\mathcal{O}$ and the number of nodes are finite, the search space
$\mathcal{A}$ of all admissible $\boldsymbol{\alpha}$ is finite, and
neural architecture search reduces to selecting the best
$\boldsymbol{\alpha} \in \mathcal{A}$. For an architecture
$\boldsymbol{\alpha}_k$ with trained weights $\boldsymbol{\theta}_k$ we
denote the resulting network by
$f(\boldsymbol{\alpha}_k, \boldsymbol{\theta}_k): \mathcal{X} \to \mathcal{Y}$.

A Mixture-of-Experts (MoE) model combines $K$ such experts, each meant
to specialise on a different region of the input space. Stacking the
expert architectures into
$\boldsymbol{\alpha} = [\boldsymbol{\alpha}_1^{\!\top}, \ldots, \boldsymbol{\alpha}_K^{\!\top}]^{\!\top}$,
the MoE prediction is the routing-weighted combination of expert outputs
\begin{equation}
\label{eq:moe_pred}
g(\boldsymbol{\alpha}, \boldsymbol{\theta}, \boldsymbol{w}_r)(\boldsymbol{x})
= \sum_{k=1}^{K} r_k(\boldsymbol{x}, \boldsymbol{w}_r)\, f(\boldsymbol{\alpha}_k, \boldsymbol{\theta}_k)(\boldsymbol{x}),
\end{equation}
where the gate values $r_k(\boldsymbol{x}, \boldsymbol{w}_r) \geq 0$ with
$\sum_{k=1}^{K} r_k(\boldsymbol{x}, \boldsymbol{w}_r) = 1$ are produced by
a routing network with parameters $\boldsymbol{w}_r$.

\textbf{Data clustering and routing.}
Prior to architecture search, the dataset is partitioned into $M$ clusters,
$\mathcal{D} = \mathcal{C}_1 \cup \mathcal{C}_2 \cup \cdots \cup \mathcal{C}_M$,
each grouping samples of similar structure (e.g.\ a semantic class group or a data modality). Throughout the paper, $n \in \{1, \ldots, N\}$ indexes individual dataset samples, $m \in \{1, \ldots, M\}$ indexes clusters, and $k \in \{1, \ldots, K\}$ indexes experts. We write $\mathcal{C}_m \subseteq \{1, \ldots, N\}$ for the sample indices that belong to cluster $m$, and denote by $m(n) \in \{1, \ldots, M\}$ the cluster index of sample $n$.

Routing is defined at the cluster level, and it is \emph{stochastic}: every cluster $m$ carries a latent assignment variable $c_m \in \{1, \ldots, K\}$ naming the expert it is routed to, and
\begin{equation}
\label{eq:routing_def}
p(c_m = k) \;=\; r_{mk} \in [0, 1], \qquad \sum_{k=1}^{K} r_{mk} = 1,
\end{equation}
the probabilities being collected in a routing matrix $\mathbf{R} \in [0, 1]^{M \times K}$. The column $\mathbf{r}_k = \mathbf{R}_{:,k} \in [0, 1]^M$ encodes the clusters expert $k$ is responsible for. The latent variables $c_{1:M}$ are what makes the model below a genuine mixture; the search optimises the distribution $\mathbf{R}$ rather than any fixed assignment, and a hard cluster$\to$expert assignment is read off only at the end of optimisation, as $I_k = \{m : \arg\max_{k'} r_{mk'} = k\} \subseteq \{1, \ldots, M\}$, the set of cluster indices routed to expert $k$.

Unlike standard MoE training, where only the gating parameters $\boldsymbol{w}_r$ are learned for a fixed architecture, here \emph{finding the routing $\mathbf{R}$ is itself part of the architecture search}: the search jointly determines the per-expert architectures $\boldsymbol{\alpha}_{1:K}$ and the assignment of clusters to experts. This couples architecture selection with data specialization, since the best architecture for an expert depends on the cluster subset it is trained on.

\subsection{MoE Objective}
\label{subsec:objective}

We cast MoE architecture search as a likelihood-maximization problem. The goal is to choose, for every expert $k$, an architecture $\boldsymbol{\alpha}_k$ and its trained parameters $\boldsymbol{\theta}_k$, together with the routing distribution $\mathbf{R}$, so as to maximize the data likelihood under the mixture induced by the latent assignments $c_{1:M}$:
\begin{equation}
\label{eq:moe_likelihood}
\prod_{n=1}^{N} \sum_{k=1}^{K} p\bigl(y_n,\, c_{m(n)} = k \,\big|\, x_n, \boldsymbol{\alpha}_k, \boldsymbol{\theta}_k\bigr)
\;\to\; \max_{\boldsymbol{\alpha}_{1:K},\, \boldsymbol{\theta}_{1:K},\, \mathbf{R}},
\end{equation}
where the event $c_{m(n)} = k$ reads ``sample $n$'s cluster is routed to expert $k$''. Summing over $k$ marginalises this event out, which is what makes~\eqref{eq:moe_likelihood} a mixture: had the cluster$\to$expert map been a fixed partition instead of a distribution, the membership indicator would be deterministic, exactly one term of the inner sum would survive, and the objective would collapse to the likelihood of a hard assignment.

Each joint term factorizes into a routing probability and a conditional likelihood:
\begin{equation}
\label{eq:moe_factor}
p\bigl(y_n,\, c_{m(n)} = k \,\big|\, x_n, \boldsymbol{\alpha}_k, \boldsymbol{\theta}_k\bigr)
= p\bigl(c_{m(n)} = k\bigr)\, p\bigl(y_n \,\big|\, x_n, \boldsymbol{\alpha}_k, \boldsymbol{\theta}_k,\, c_{m(n)} = k\bigr).
\end{equation}
The membership probability is the cluster-level routing,
$p\bigl(c_{m(n)} = k\bigr) = r_{m(n), k}$, by the definition~\eqref{eq:routing_def} of $\mathbf{R}$. The conditional factor is the expensive one: evaluating it exactly for a candidate architecture would mean actually training that architecture --- and, in the MoE setting, all $K$ of them --- on the cluster subset the routing assigns to it. We therefore replace exact training by a learned \emph{surrogate}, and we let that surrogate be \emph{vector-valued}, with one component per cluster, because two indices must be kept apart here: the factor is indexed by the sample $n$, hence by its cluster $m(n)$, and asks how well expert $k$ --- trained on the clusters that $\mathbf{r}_k$ assigns to it --- explains the data of cluster $m(n)$.

\begin{definition}
\label{def:surrogate}
Let the training data be partitioned into $M$ clusters
$\mathcal{D}_{\text{train}} = \mathcal{C}_1 \cup \cdots \cup \mathcal{C}_M$;
for $\boldsymbol{b} \in \{0, 1\}^M$ let
$\mathcal{D}_{\boldsymbol{b}} = \bigcup_{m: b_m = 1} \mathcal{C}_m$ be
the cluster subset selected by~$\boldsymbol{b}$ and
$\boldsymbol{\theta}^{*}_{\boldsymbol{\alpha}, \boldsymbol{b}}$ the
parameters obtained by training $\boldsymbol{\alpha}$ on it --- for a
binary routing column we abbreviate these to $\mathcal{D}_{\mathbf{r}_k}$
and $\boldsymbol{\theta}_k^{*}$. A \emph{vector-valued surrogate
function}
\begin{equation}
\label{eq:surrogate_vertices}
\begin{gathered}
\boldsymbol{u}: \mathcal{A} \times [0, 1]^M \to \mathbb{R}_{\geq 0}^{M},
\qquad \boldsymbol{b} \in \{0, 1\}^M,\\[2pt]
u_m(\boldsymbol{\alpha}, \boldsymbol{b}) \;\approx\;
\mathrm{NLL}^{(m)}_{\mathrm{val}}\!\left(f(\boldsymbol{\alpha}, \boldsymbol{\theta}^*_{\boldsymbol{\alpha}, \boldsymbol{b}})\right)
\;=\; - \frac{1}{|\mathcal{C}_m^{\text{val}}|}
\sum_{n \in \mathcal{C}_m^{\text{val}}}
\log p\bigl(y_n \,\big|\, x_n, \boldsymbol{\alpha}, \boldsymbol{\theta}^{*}_{\boldsymbol{\alpha}, \boldsymbol{b}}\bigr),
\end{gathered}
\end{equation}
has as its $m$-th component the \emph{mean per-sample negative
log-likelihood} (NLL) of architecture $\boldsymbol{\alpha}$, trained on
$\mathcal{D}_{\boldsymbol{b}}$, when evaluated on the validation part
$\mathcal{C}_m^{\text{val}}$ of cluster $m$ --- all without performing
the actual training. The NLL is the cross-entropy of the correct class
in the classification experiments of Sections~\ref{subsec:toy}
and~\ref{subsec:cifar}, and the negative log-density of the predictive
distribution in the forecasting experiment of Section~\ref{subsec:ts}.
Components with $b_m = 0$ are \emph{out-of-region} queries: they predict
how the expert behaves on clusters it was never trained on, which is
exactly the evidence the routing search needs. The two arguments thus
play distinct roles: the mask (in the search, the routing column
$\mathbf{r}_k$) selects the \emph{training} region, $m$ selects the
\emph{evaluation} cluster.

Only the vertices $\boldsymbol{b}$ carry this operational meaning, since
only a binary mask corresponds to a real training run; the second
argument is nevertheless allowed to range over the whole cube, because
the routing column $\mathbf{r}_k$ that the search manipulates is in
general fractional. We fix the fractional values \emph{canonically}, as
the average over masks drawn independently per cluster,
\begin{equation}
\label{eq:mask_extension}
u_m(\boldsymbol{\alpha}, \mathbf{r}) \;=\;
\mathbb{E}_{\boldsymbol{b} \sim \mathrm{Bernoulli}(\mathbf{r})}
\bigl[u_m(\boldsymbol{\alpha}, \boldsymbol{b})\bigr]
\;=\!\!\sum_{\boldsymbol{b} \in \{0,1\}^M}\!\!
\Bigl(\prod_{m'=1}^{M} r_{m'}^{\,b_{m'}}\,(1 - r_{m'})^{1 - b_{m'}}\Bigr)\,
u_m(\boldsymbol{\alpha}, \boldsymbol{b}),
\end{equation}
which is precisely the quantity that the straight-through
Gumbel-Softmax M-step of Section~\ref{subsec:sgem} estimates when it
draws a binary mask from $\mathbf{r}$. The extension agrees
with~\eqref{eq:surrogate_vertices} at the vertices, is multilinear in
$\mathbf{r}$ and hence continuous, and inherits any bound satisfied on
the vertices --- the two properties the convergence analysis of
Section~\ref{subsec:theory} asks for
(Assumptions~\ref{ass:bounded} and~\ref{ass:cont}). A concrete
construction of $\boldsymbol{u}$ as a learned regressor with $M$ outputs
is given in Section~\ref{subsec:surr}.
\end{definition}

For the conditional factor we then adopt the standard MLE proxy: the experts are trained by minimising exactly this loss, so the mean per-sample NLL an expert attains on a group of samples is what training drives down. The surrogate being constant within a cluster, we read that factor off it: for every $n$ in cluster $m$,
\begin{equation}
\label{eq:moe_persample_lik}
p\bigl(y_n \,\big|\, x_n, \boldsymbol{\alpha}_k, \boldsymbol{\theta}_k,\, c_{m(n)} = k\bigr)
\;\approx\; e^{-u_m(\boldsymbol{\alpha}_k, \mathbf{r}_k)},
\end{equation}
i.e.\ $e^{-u_m}$ is the per-sample geometric mean of the conditional likelihood of expert $k$ over cluster $m$. As in the standard EM treatment of mixtures, the component is evaluated at its \emph{current} parameters: $u_m(\boldsymbol{\alpha}_k, \mathbf{r}_k)$ uses $\boldsymbol{\theta}_k^{*}$ trained on $\mathcal{D}_{\mathbf{r}_k}$, without retraining expert $k$ on $\mathcal{D}_{\mathbf{r}_k} \cup \mathcal{C}_m$. Substituting~\eqref{eq:moe_factor} and~\eqref{eq:moe_persample_lik} into~\eqref{eq:moe_likelihood} yields the surrogate-based objective
\begin{equation}
\label{eq:moe_surr_likelihood}
\prod_{n=1}^{N} \sum_{k=1}^{K} r_{m(n), k}\, e^{-u_{m(n)}(\boldsymbol{\alpha}_k, \mathbf{r}_k)}
\;\to\; \max_{\boldsymbol{\alpha}_{1:K},\, \mathbf{R}}.
\end{equation}

Because routing is decided at the cluster level, $r_{m(n), k} = r_{m, k}$ for every $n \in \mathcal{C}_m$, and by~\eqref{eq:moe_persample_lik} the conditional likelihood is likewise identical across the samples of a cluster. Taking the logarithm of~\eqref{eq:moe_surr_likelihood} and grouping the $N$ per-sample terms by cluster collapses each cluster's $|\mathcal{C}_m|$ identical log-factors into a single weighted term, yielding the \emph{cluster-aware MoE objective}:
\begin{equation}
\label{eq:moe_objective}
\mathcal{J}(\boldsymbol{\alpha}_{1:K}, \mathbf{R}) \;=\;
\sum_{m=1}^{M} |\mathcal{C}_m|\,\log \sum_{k=1}^{K} r_{mk}\, e^{-u_m(\boldsymbol{\alpha}_k, \mathbf{r}_k)}
\;\to\; \max_{\boldsymbol{\alpha}_{1:K},\, \mathbf{R}}.
\end{equation}
Intuitively, $r_{mk}$ specifies how strongly cluster $m$ is routed to expert $k$, $e^{-u_m(\boldsymbol{\alpha}_k, \mathbf{r}_k)}$ is the per-sample likelihood that expert $k$ attains \emph{on cluster $m$}, and the cluster-size weight $|\mathcal{C}_m|$ reflects that a cluster's contribution to the joint log-likelihood scales with the number of samples it contains. Problem~\eqref{eq:moe_objective} is solved by the algorithm of Section~\ref{subsec:sgem} via Generalised EM with an adaptively refined surrogate.
 
\section{Solving the Cluster-aware Objective by Surrogate-assisted EM}
\label{sec:main_part}

The objective~\eqref{eq:moe_objective} cannot be optimised directly:
every evaluation of a candidate architecture would require training it
from scratch on the cluster subset the routing assigns to it. We
therefore first construct a learned surrogate of the conditional
likelihood (Section~\ref{subsec:surr}), then embed it into a
generalized EM procedure that alternates between updating the routing
and the expert architectures (Section~\ref{subsec:sgem}), and finally
prove that the procedure converges even though the surrogate itself is
refined during the search (Section~\ref{subsec:theory}).

\subsection{Learning the Surrogate}
\label{subsec:surr}

We now give the concrete construction of the vector-valued surrogate
$\boldsymbol{u}$ of Definition~\ref{def:surrogate}.
Figure~\ref{fig:method_overview} gives an overview of the loop it sits
in, in which the surrogate scores candidate architectures against the
current cluster assignment and only a few selected candidates are ever
trained.

\begin{figure}[h]
	\centering
    \includegraphics[width=0.7\textwidth]{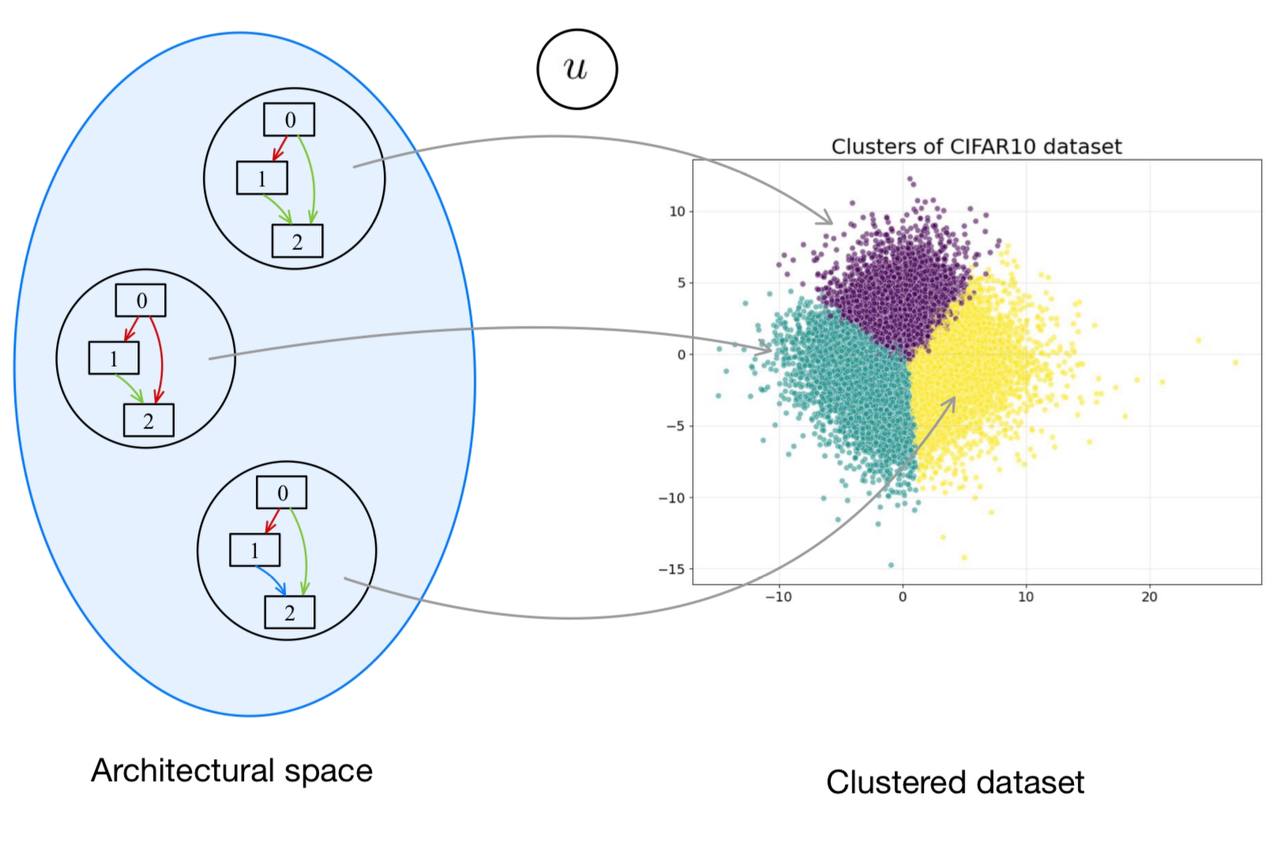}
	\caption{Method workflow: a surrogate model $u$ is used to select
    the most suitable expert architectures for each cluster.}
	\label{fig:method_overview}
\end{figure}

The surrogate is trained on a dataset of architecture/subset pairs
collected by genuine training:
\begin{equation}
\mathcal{D}_{\text{surr}} = \{(\boldsymbol{\alpha}^{(i)}, \boldsymbol{b}^{(i)}, \boldsymbol{y}^{(i)})\}_{i=1}^{S}, \qquad
y^{(i)}_m = \mathrm{NLL}^{(m)}_{\mathrm{val}}\!\bigl(f(\boldsymbol{\alpha}^{(i)}, \boldsymbol{\theta}^*_{\boldsymbol{\alpha}^{(i)}, \boldsymbol{b}^{(i)}})\bigr).
\end{equation}
A single training run yields the whole target vector at once --- one
forward pass over the validation set, with the per-sample losses bucketed
by cluster --- so the vector-valued surrogate costs no extra training
while providing $M$ regression targets per observation instead of one,
including targets for clusters outside $\boldsymbol{b}^{(i)}$.
We parametrise $\boldsymbol{u}_{\boldsymbol{\theta}}$ as a Graph Neural Network
encoder over the architecture DAG concatenated with an embedding of
the cluster mask $\boldsymbol{b}$, followed by an $M$-dimensional output
head, and minimise the mean squared regression loss
\begin{equation}
\label{eq:surr_training}
\min_{\boldsymbol{\theta}} \;
\frac{1}{S\,M}
\sum_{(\boldsymbol{\alpha}, \boldsymbol{b}, \boldsymbol{y}) \in \mathcal{D}_{\text{surr}}}
\sum_{m=1}^{M}
\bigl(u_{\boldsymbol{\theta}, m}(\boldsymbol{\alpha}, \boldsymbol{b}) - y_m\bigr)^2 .
\end{equation}
Targets are clipped at an upper quantile before the fit, which both keeps
the regression from being dominated by far-out-of-region outliers and
enforces the boundedness required by Assumption~\ref{ass:bounded}.

The surrogate is refined inside an active learning loop. Ranking
architecture candidates needs a single number, so the predicted vector
is reduced to a scalar --- either the size-weighted average over the
clusters the candidate was trained on,
\begin{equation}
\label{eq:scalar_u}
\bar u(\boldsymbol{\alpha}_k, \mathbf{r}_k) \;=\;
\Bigl(\textstyle\sum_{m \in I_k} |\mathcal{C}^{\text{val}}_m|\Bigr)^{-1}
\textstyle\sum_{m \in I_k} |\mathcal{C}^{\text{val}}_m|\; u_m(\boldsymbol{\alpha}_k, \mathbf{r}_k),
\end{equation}
which discards the dependence on the evaluation cluster, or, inside the
M-step, the responsibility-weighted loss of~\eqref{eq:mstep_alpha}.
At every iteration we use Monte-Carlo dropout to obtain a predictive mean
$\mu(\boldsymbol{\alpha}, \boldsymbol{b})$ and standard deviation
$\sigma(\boldsymbol{\alpha}, \boldsymbol{b})$ of that scalar over $T$ stochastic
forward passes, and score candidates by the Lower Confidence Bound
$\mathrm{LCB} = \mu - \sigma$ (since $\boldsymbol{u}$ is a loss, the most
promising candidates are those with the \emph{lowest} predicted NLL
together with high epistemic uncertainty). The most promising
candidate is trained from scratch, the resulting triple is appended
to $\mathcal{D}_{\text{surr}}$, and the surrogate is refit. Every
observation, and hence the whole fit~\eqref{eq:surr_training}, lives at a
vertex $\boldsymbol{b} \in \{0,1\}^M$; fractional masks are covered by
the extension~\eqref{eq:mask_extension}.

\subsection{Surrogate-assisted Generalized EM (SGEM)}
\label{subsec:sgem}

Treating the cluster$\to$expert routing as a latent variable turns the
surrogate objective~\eqref{eq:moe_objective} into a mixture model on
which Expectation--Maximisation applies directly, with the surrogate
$\boldsymbol{u}^{(t)}$ standing in for the otherwise-intractable expert quality. The
derivation is the standard EM treatment of a finite mixture, the
per-component data-fit term being the surrogate factor
$e^{-u_{m}(\boldsymbol{\alpha}_k, \mathbf{r}_k)}$, evaluated at the cluster
$m$ of the sample being explained.

\textbf{Latent-variable form of the objective.}
For every sample $n$ introduce a one-hot latent vector
$\mathbf{z}_n \in \{0, 1\}^K$ with $z_{nk} = 1 \iff$ sample $n$ is
generated by expert $k$ --- the per-sample copy of the cluster-level
assignment $c_{m(n)}$ of Section~\ref{subsec:searchspace}, so that
$z_{nk} = 1 \iff c_{m(n)} = k$ --- and collect them into the latent matrix
$\mathbf{Z} = \|z_{nk}\| \in \{0,1\}^{N \times K}$. The generative
story is the standard mixture one: the expert of sample $n$ is drawn
from its cluster's routing distribution,
$p(z_{nk} = 1) = r_{m(n), k}$, and the label is then emitted with that
expert's per-sample likelihood on the sample's own cluster,
$e^{-u_{m(n)}(\boldsymbol{\alpha}_k, \mathbf{r}_k)}$
[cf.~\eqref{eq:moe_persample_lik}]. The resulting \emph{complete-data}
likelihood is
\begin{equation}
\label{eq:complete_data}
p\bigl(\mathbf{y}, \mathbf{Z} \mid \boldsymbol{\alpha}_{1:K}, \mathbf{R}\bigr)
\;=\; \prod_{n=1}^{N} \prod_{k=1}^{K}
\Bigl( r_{m(n), k}\, e^{-u_{m(n)}(\boldsymbol{\alpha}_k, \mathbf{r}_k)} \Bigr)^{z_{nk}},
\end{equation}
in which the routing $r_{mk}$ acts as the mixing weight and
$e^{-u_{m(n)}(\boldsymbol{\alpha}_k, \mathbf{r}_k)}$ as the per-component
likelihood --- a component density that, as in any mixture model, is
evaluated at the observation it is asked to explain.

\emph{Equivalence with~\eqref{eq:moe_objective}.} Marginalising the
one-hot rows $\mathbf{z}_n$ out of~\eqref{eq:complete_data} and grouping
the samples by cluster recovers
$\exp \mathcal{J}(\boldsymbol{\alpha}_{1:K}, \mathbf{R})$, exactly as in
the passage from~\eqref{eq:moe_surr_likelihood}
to~\eqref{eq:moe_objective}. Maximising~\eqref{eq:moe_objective} is thus
the incomplete-data MLE of the latent-variable
model~\eqref{eq:complete_data}, which we now solve by alternating
maximisation of the variational free energy
$\mathcal{F}_u(q, \theta) = \mathbb{E}_q[\log p(\mathbf{y}, \mathbf{Z}\mid\theta)] + \mathbb{H}[q]$
over the posterior $q(\mathbf{Z})$ (E-step) and the parameters
$\theta = (\boldsymbol{\alpha}_{1:K}, \mathbf{R})$ (M-step).

\textbf{E-step.} At fixed $(\boldsymbol{\alpha}_{1:K}, \mathbf{R})$ the
free energy is maximised by the exact posterior
$q(\mathbf{Z}) = p(\mathbf{Z}\mid \mathbf{y}, \boldsymbol{\alpha}_{1:K}, \mathbf{R})$.
From~\eqref{eq:complete_data},
$\log q(\mathbf{Z}) \propto \sum_{n}\sum_{k} z_{nk}\bigl(\log r_{m(n),k}
- u_{m(n)}(\boldsymbol{\alpha}_k, \mathbf{r}_k)\bigr)$, so the posterior
factorises over samples and the responsibility of expert $k$ for sample
$n$ is the normalised mixture weight
$\gamma_{nk} \propto r_{m(n),k}\, e^{-u_{m(n)}(\boldsymbol{\alpha}_k, \mathbf{r}_k)}$.
This depends on $n$ only through its cluster $m(n)$, so all samples of a
cluster share one responsibility, which we write as
\begin{equation}
\label{eq:estep}
q_{mk} \;=\; \frac{r_{mk}\, e^{-u_m(\boldsymbol{\alpha}_k, \mathbf{r}_k)}}
{\sum_{j=1}^{K} r_{mj}\, e^{-u_m(\boldsymbol{\alpha}_j, \mathbf{r}_j)}},
\qquad \sum_{k=1}^{K} q_{mk} = 1.
\end{equation}
Intuitively, a cluster is softly attached to the experts that both
already claim it ($r_{mk}$ large) and are predicted to model
\emph{that cluster} well ($u_m(\boldsymbol{\alpha}_k, \mathbf{r}_k)$ small);
the out-of-region components of Definition~\ref{def:surrogate} are what
let different clusters favour different experts.

\textbf{M-step.} Substituting the E-step posterior into
$\mathbb{E}_q[\log p(\mathbf{y}, \mathbf{Z}\mid\cdot)]$ and grouping the
sample-level terms by cluster yields the $Q$-function
\begin{equation}
\label{eq:qfun_main}
Q(\boldsymbol{\alpha}_{1:K}, \mathbf{R})
\;=\; \sum_{m=1}^{M} |\mathcal{C}_m| \sum_{k=1}^{K} q_{mk}
\bigl[\log r_{mk} \,-\, u_m(\boldsymbol{\alpha}_k, \mathbf{r}_k)\bigr],
\end{equation}
which the M-step maximises over the routing $\mathbf{R}$ and the
per-expert architectures $\boldsymbol{\alpha}_{1:K}$ (maximising $Q$ at
fixed $q$ coincides with maximising $\mathcal{F}_u(q, \cdot)$, as the
entropy $\mathbb{H}[q]$ is independent of $\theta$). The two parameter
blocks are updated in turn.

\emph{M-step (routing).} For fixed architectures, $Q$ is maximised over
$\mathbf{R}$ subject to $\sum_k r_{mk}=1$ by gradient ascent on the
routing logits, using a straight-through Gumbel-Softmax to back-propagate
through the discrete masks $\mathbf{r}_k$ that the surrogate consumes,
which is exactly how the extension~\eqref{eq:mask_extension} is
evaluated. The term
$\sum_{m,k} |\mathcal{C}_m|\, q_{mk}\log r_{mk}$ alone is a cross-entropy
whose constrained optimum is $r_{mk}=q_{mk}$, i.e.\ it pulls the routing
towards the current responsibilities, while the
$-q_{mk}\,u_m(\boldsymbol{\alpha}_k, \mathbf{r}_k)$ term additionally
biases each column $\mathbf{r}_k$ towards masks on which expert $k$ is
predicted to perform well \emph{on the clusters that claim it}: enlarging
$\mathbf{r}_k$ with a foreign cluster raises $u_{m'}$ on the clusters
$m'$ the expert already owns, and the gradient sees that interference.
Because the predicted loss varies with the evaluation cluster, this step
does not collapse onto a single expert.

\emph{M-step (architectures).} For fixed routing the only
architecture-dependent part of~\eqref{eq:qfun_main} is
\begin{equation}
\label{eq:mstep_alpha}
\boldsymbol{\alpha}_k \;\in\; \arg\min_{\boldsymbol{\alpha} \in \mathcal{A}}
\;\sum_{m=1}^{M} |\mathcal{C}_m|\, q_{mk}\, u_m(\boldsymbol{\alpha}, \mathbf{r}_k),
\end{equation}
so each expert is optimised independently by \emph{minimising} its
responsibility-weighted surrogate loss --- a weighted average of the
per-cluster losses that concentrates on the clusters expert $k$ is
responsible for. Since $\boldsymbol{u}$ is non-differentiable in
$\boldsymbol{\alpha}$, this step is solved by sampling: for each expert
$k$ we draw a pool of candidate architectures from $\mathcal{A}$, score
every candidate by~\eqref{eq:mstep_alpha} under the current
responsibilities, and keep the minimiser as
$\boldsymbol{\alpha}_k$. Scoring the pool costs only forward passes of
$\boldsymbol{u}_{\boldsymbol{\theta}}$; the expensive operation is the
active-learning step that follows, in which a small number of the
selected candidates are actually trained. Which of them is trained is
decided by the Lower Confidence Bound of Section~\ref{subsec:surr},
so the same pass both advances the M-step and supplies the surrogate
with the observation it is least certain about. Because the pool is
drawn afresh at every iteration, the M-step is a stochastic
\emph{improvement} rather than an exact maximisation --- which is why
the analysis of Section~\ref{subsec:theory} only assumes a
generalised (improvement) M-step.

\textbf{Exploration of the mask space.} Left to itself, the loop above
under-explores. The masks it trains on all concentrate around the
current routing, so the surrogate never observes near-pure regions and
drifts towards confirming the split it already has --- an effect that is
mild when the domains differ sharply in difficulty and severe when they
do not. We therefore augment the active-learning step with three
mechanisms, driven by the algorithm's own signals and never by any
domain labels. All three rest on \emph{cluster affinities} read off the
observations collected so far: clusters whose per-cluster losses react
similarly to architectures --- after the global effect of an
architecture has been removed --- are grouped by agglomerative
clustering. First, each affinity group is proposed as a training mask in
its own right and evaluated with several architectures; these
\emph{exploration masks} supply the near-pure regions that the plain
acquisition, which keeps sampling masks close to the current routing,
never reaches. Second, the same affinities propose complete candidate
partitions of the clusters into $K$ regions, which lets the routing
escape a local optimum in one \emph{jump} instead of drifting there
cluster by cluster; a candidate is accepted only if it improves the real
--- not the predicted --- objective proxy, so a mis-calibrated surrogate
cannot push the routing to a worse partition. Finally, once the EM loop
has terminated and the routing is fixed, the architecture of each expert
is re-searched by random search restricted to its final region, an
ordinary per-region NAS problem.
Appendix~\ref{app:exploration} details the three mechanisms.

The complete procedure --- alternating the E-step, the two M-steps over
the routing $\mathbf{R}$ and the architectures $\boldsymbol{\alpha}_{1:K}$,
and an \emph{S-step}, the active-learning refit of the surrogate itself
--- is summarised in Algorithm~\ref{alg:sgem}. We name the last one by
analogy with the other two: like the E- and M-steps it is executed once
per iteration, and it is the only one that spends real training
compute.

\begin{algorithm}[h]
\caption{Surrogate-assisted Generalized EM (SGEM) for MoE Architecture Search}
\label{alg:sgem}
\begin{algorithmic}[1]
\Require Surrogate $\boldsymbol{u}_{\boldsymbol{\theta}}$, search space
$\mathcal{A}$, number of experts~$K$, clusters $M$,
EM iterations $T$.
\State Initialise architectures
$\boldsymbol{\alpha}_1, \dots, \boldsymbol{\alpha}_K$ at random and
logits $\boldsymbol{\ell}\!\in\!\mathbb{R}^{M\times K}\!\sim\!\mathcal{N}(\mathbf{0}, I)$.
\For{$t = 1$ to $T$}
    \State \textbf{E-step.} $\;q_{mk} \propto r_{mk}\, e^{-u_m(\boldsymbol{\alpha}_k, \mathbf{r}_k)}$.
    \State \textbf{M-step (r).} Gradient ascent on
    $\sum_m |\mathcal{C}_m|\sum_k q_{mk}[\log r_{mk} - u_m(\boldsymbol{\alpha}_k, \mathbf{r}_k)]$
    using straight-through Gumbel-Softmax for $\mathbf{r}_k$.
    \State \textbf{M-step ($\boldsymbol{\alpha}$).} For each expert
    $k$, sample candidates and keep the one minimising
    $\sum_m |\mathcal{C}_m|\, q_{mk}\, u_m(\cdot, \mathbf{r}_k)$.
    \State \textbf{S-step.} Train selected candidates,
    update $\mathcal{D}_{\text{surr}}$, refit $u_{\boldsymbol{\theta}}$.
\EndFor
\State \Return $\boldsymbol{\alpha}_{1:K}$ and hard assignment
$\hat{r}_{mk} = 1, [k = \arg\max_{k'} r_{mk'}]$.
\end{algorithmic}
\end{algorithm}

\subsection{Convergence under a Drifting Surrogate}
\label{subsec:theory}

We show that, provided the surrogate error decays fast enough, SGEM
converges in the same sense as Generalised EM --- even though the
surrogate $u^{(t)}$ used inside the E- and M-steps is itself updated
across iterations.

Let $\boldsymbol{u}_{\mathrm{true}}$ denote the (unknown) ground-truth
expert loss: at a binary mask $\boldsymbol{b}$ its $m$-th component is
the per-sample NLL on cluster $m$ actually attained by training
architecture $\boldsymbol{\alpha}$ on $\mathcal{D}_{\boldsymbol{b}}$,
and it is carried to fractional $\mathbf{r}$
by~\eqref{eq:mask_extension}, exactly as the surrogate is. Define the
per-iteration surrogate error directly in the loss scale, as the
sup-norm over both arguments and all output components,
\begin{equation}
\label{eq:eps_t}
\varepsilon_t \;=\; \sup_{\boldsymbol{\alpha} \in \mathcal{A}, \;
\mathbf{r} \in [0, 1]^M, \; m \in \{1, \ldots, M\}}
\bigl|u^{(t)}_m(\boldsymbol{\alpha}, \mathbf{r})
\;-\; u_{\mathrm{true}, m}(\boldsymbol{\alpha}, \mathbf{r})\bigr|.
\end{equation}
Whenever both maps are read through the
extension~\eqref{eq:mask_extension} --- as they are in the M-step --- the
supremum in~\eqref{eq:eps_t} is attained at a binary mask, so
$\varepsilon_t$ is the surrogate error on the $2^M$ masks, precisely the
set on which the surrogate can be given evidence.
The parameter $\theta = (\boldsymbol{\alpha}_{1:K}, \mathbf{R})$ lies
on the compact set
$\Theta = \mathcal{A}^K \times (\Delta_K)^M$; we write
$N = \sum_m |\mathcal{C}_m|$ for the total sample count. The (size-weighted)
true log-likelihood is
\begin{equation}
\label{eq:Ltrue}
L_{\mathrm{true}}(\theta) \;=\; \sum_{m=1}^M |\mathcal{C}_m|\,\log
\sum_{k=1}^K r_{mk}\, e^{-u_{\mathrm{true}, m}(\boldsymbol{\alpha}_k, \mathbf{r}_k)},
\end{equation}
and the \emph{true} Q-function is
\begin{equation}
\label{eq:Qtrue}
Q_{\mathrm{true}}(\theta; \theta') \;=\; \sum_{m=1}^M |\mathcal{C}_m|
\sum_{k=1}^K w_{mk}(\theta')
\bigl[\log r_{mk} \,-\, u_{\mathrm{true}, m}(\boldsymbol{\alpha}_k, \mathbf{r}_k)\bigr],
\end{equation}
where $w_{mk}(\theta')$ are the responsibilities computed under
$\boldsymbol{u}_{\mathrm{true}}$ (Proposition~\ref{prop:elbo} in
Appendix~\ref{sec:proof}).

Note that the analysis below never uses the dependence of the loss on the
evaluation cluster $m$ --- it only requires that each component be bounded
and that the error~\eqref{eq:eps_t} be summable. Theorem~\ref{thm:sgem}
therefore holds for any surrogate satisfying these two conditions.

\begin{assumption}
\label{ass:bounded}
There exists $U_{\max} < \infty$ such that
$0 \leq u_{\mathrm{true}, m}(\boldsymbol{\alpha}, \mathbf{r}),\,
u^{(t)}_m(\boldsymbol{\alpha}, \mathbf{r}) \leq U_{\max}$ for all
$\boldsymbol{\alpha}\in\mathcal{A}$, $\mathbf{r}\in[0,1]^M$,
$m \in \{1, \ldots, M\}$, $t\geq 0$. Since~\eqref{eq:mask_extension} is an
average of vertex values, a bound on the binary masks already implies the
bound on the whole cube. In practice the surrogate targets are
clipped at an upper quantile, which enforces it: an expert evaluated
far outside its training region can otherwise incur an arbitrarily large NLL.
\end{assumption}

\begin{assumption}
\label{ass:summable}
The surrogate errors are summable: $\sum_{t=0}^{\infty} \varepsilon_t < \infty$.
\end{assumption}

\begin{assumption}
\label{ass:gem}
At every iteration $t$ the M-step of Algorithm~\ref{alg:sgem} maximises
the free energy over a \emph{candidate set}
$\mathcal{M}^{(t)} \subseteq \Theta$ that contains the incumbent,
$\theta^{(t)} \in \mathcal{M}^{(t)}$:
\begin{equation*}
\theta^{(t+1)} \;\in\; \arg\max_{\theta \in \mathcal{M}^{(t)}}
\mathcal{F}_{u^{(t)}}\bigl(q^{(t)}, \theta\bigr).
\end{equation*}
In Algorithm~\ref{alg:sgem} the set $\mathcal{M}^{(t)}$ consists of the
architectures drawn into the pool at iteration $t$ paired with the
routings visited by the gradient update, and keeping the incumbent in
the pool is what makes the inclusion $\theta^{(t)} \in \mathcal{M}^{(t)}$
hold. The step is then \emph{generalised} in the usual sense,
\begin{equation*}
\mathcal{F}_{u^{(t)}}\bigl(q^{(t)}, \theta^{(t+1)}\bigr)
\;\geq\; \mathcal{F}_{u^{(t)}}\bigl(q^{(t)}, \theta^{(t)}\bigr),
\end{equation*}
without being an exact maximisation over $\Theta$. Here the variational
free energy is
\begin{equation*}
\mathcal{F}_u(q, \theta) \;=\; \sum_{m=1}^M |\mathcal{C}_m|
\sum_{k=1}^K q_{mk}\,\log\!\frac{r_{mk}\, e^{-u_m(\boldsymbol{\alpha}_k, \mathbf{r}_k)}}{q_{mk}}.
\end{equation*}
\end{assumption}

\begin{assumption}
\label{ass:cont}
For every $\boldsymbol{\alpha}\in\mathcal{A}$ and every
$m \in \{1, \ldots, M\}$ the map
$\mathbf{r}\mapsto u_{\mathrm{true}, m}(\boldsymbol{\alpha}, \mathbf{r})$
is continuous on $[0,1]^M$. For the canonical
extension~\eqref{eq:mask_extension} this holds by construction, the
extension being a polynomial in $\mathbf{r}$; we state it separately
because continuity is all the proof uses.
\end{assumption}

\begin{theorem}[Convergence of SGEM with adaptive surrogate]
\label{thm:sgem}
Under Assumptions~\ref{ass:bounded}--\ref{ass:cont}, the iterates
$\{\theta^{(t)}\}$ produced by Algorithm~\ref{alg:sgem} satisfy
\begin{enumerate}
\item[\textup{(a)}] the sequence $\{L_{\mathrm{true}}(\theta^{(t)})\}$ converges;
\item[\textup{(b)}] every limit point $\theta^{\star}$ is a fixed point
of the M-step with respect to the \emph{true} objective: writing
$\theta^{(t_j)} \to \theta^{\star}$ for a convergent subsequence and
\begin{equation*}
\mathcal{M}^{\infty}(\theta^{\star}) \;=\;
\bigl\{\theta' \in \Theta \;:\; \theta' \in \mathcal{M}^{(t_j)}
\text{ for infinitely many } j \bigr\}
\end{equation*}
for the candidates the M-step keeps offering near $\theta^{\star}$, one has
$Q_{\mathrm{true}}(\theta';\theta^{\star}) \leq
Q_{\mathrm{true}}(\theta^{\star};\theta^{\star})$
for every $\theta' \in \mathcal{M}^{\infty}(\theta^{\star})$.
\end{enumerate}
\end{theorem}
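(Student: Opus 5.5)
The plan is to run the standard generalised-EM monotonicity argument on the surrogate objective and then pay for the drift between $u^{(t)}$ and $u_{\mathrm{true}}$ with the errors $\varepsilon_t$.

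Write $L_u(\theta)=\sum_m|\mathcal{C}_m|\log\sum_k r_{mk}e^{-u_m(\boldsymbol{\alpha}_k,\mathbf{r}_k)}$ for a generic loss map $u$. The first ingredient is a perturbation bound. Because $\log\sum_k r_{mk}e^{-u_k}$ is $1$-Lipschitz in $(u_k)_k$ in the sup-norm,
\[
|L_{u^{(t)}}(\theta)-L_{\mathrm{true}}(\theta)|\le N\varepsilon_t \qquad\text{for all }\theta\in\Theta.
\]
The second ingredient is the free-energy identity of Proposition~\ref{prop:elbo}. It gives
\[
\mathcal{F}_u(q,\theta)=L_u(\theta)-\sum_m|\mathcal{C}_m|\,\mathrm{KL}\bigl(q_{m\cdot}\,\|\,\text{posterior}_m\bigr),
\]
with equality when $q$ is the E-step posterior.

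\textbf{Part (a).} Let $q^{(t)}$ be the E-step posterior under $u^{(t)}$ at $\theta^{(t)}$. Assumption~\ref{ass:gem} and the identity above give
\[
L_{u^{(t)}}(\theta^{(t+1)})\ \ge\ \mathcal{F}_{u^{(t)}}(q^{(t)},\theta^{(t+1)})\ \ge\ \mathcal{F}_{u^{(t)}}(q^{(t)},\theta^{(t)})=L_{u^{(t)}}(\theta^{(t)}).
\]
Passing to the true objective on both ends yields the near-monotonicity
\[
L_{\mathrm{true}}(\theta^{(t+1)})\ \ge\ L_{\mathrm{true}}(\theta^{(t)})-2N\varepsilon_t .
\]
By Assumption~\ref{ass:bounded}, $L_{\mathrm{true}}$ lies in $[-NU_{\max},0]$. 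The sequence $a_t=L_{\mathrm{true}}(\theta^{(t)})-2N\sum_{s\ge t}\varepsilon_s$ is then nondecreasing, and it is bounded because of Assumption~\ref{ass:summable}, so it converges. The tail sums $\sum_{s\ge t}\varepsilon_s$ tend to zero, hence $L_{\mathrm{true}}(\theta^{(t)})$ converges.

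\textbf{Part (b).} The same chain also controls the KL slack. From
\[
\mathcal{F}_{u^{(t)}}(q^{(t)},\theta^{(t+1)})-\mathcal{F}_{u^{(t)}}(q^{(t)},\theta^{(t)})\le L_{u^{(t)}}(\theta^{(t+1)})-L_{u^{(t)}}(\theta^{(t)})\to0,
\]
the M-step gain in free energy vanishes. Now fix a subsequence $\theta^{(t_j)}\to\theta^\star$ and a candidate $\theta'\in\mathcal{M}^{\infty}(\theta^\star)$, and pass to a further subsequence along which $\theta'\in\mathcal{M}^{(t_j)}$. Maximality of $\theta^{(t_j+1)}$ over $\mathcal{M}^{(t_j)}$ gives
\[
\mathcal{F}_{u^{(t_j)}}(q^{(t_j)},\theta')\le \mathcal{F}_{u^{(t_j)}}(q^{(t_j)},\theta^{(t_j+1)})\le L_{u^{(t_j)}}(\theta^{(t_j)})+o(1).
\]
The right-hand side equals $\mathcal{F}_{u^{(t_j)}}(q^{(t_j)},\theta^{(t_j)})+o(1)$. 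Since $\mathcal{F}_u(q,\theta)=\sum_m|\mathcal{C}_m|\sum_k q_{mk}[\log r_{mk}-u_m]+\text{entropy}$, and the entropy term is common to both sides, this reduces to
\[
Q^{(t_j)}(\theta';\theta^{(t_j)})\le Q^{(t_j)}(\theta^{(t_j)};\theta^{(t_j)})+o(1).
\]

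Three limit passages finish the argument.
\begin{enumerate}
\item[(i)] Replace $u^{(t_j)}$ by $u_{\mathrm{true}}$, both in the $u$-terms and in the responsibilities. The responsibilities are softmaxes of $\log r_{mk}-u_m$, so they move by at most $O(\varepsilon_{t_j})$ multiplicatively, and $\varepsilon_{t_j}\to0$.
\item[(ii)] Let $\theta^{(t_j)}\to\theta^\star$. $\mathcal{A}^K$ is finite, so eventually $\boldsymbol{\alpha}^{(t_j)}=\boldsymbol{\alpha}^\star$. In $\mathbf{R}$, continuity holds by Assumption~\ref{ass:cont}, which makes $w_{mk}(\cdot)$ and $u_{\mathrm{true}}$ continuous.
\item[(iii)] Handle the $\log r_{mk}$ terms.
\end{enumerate}

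Step (iii) is the main obstacle, because $\log r_{mk}$ is unbounded near the boundary of the simplex. On the right-hand side, $w_{mk}\log r_{mk}=r_{mk}e^{-u}\log r_{mk}/Z$, and $Z\ge e^{-U_{\max}}$. This expression tends to $0$ as $r_{mk}\to0$ (using $x\log x\to0$), so the function $\theta\mapsto Q_{\mathrm{true}}(\theta;\theta)$ is continuous on $\Theta$. On the left-hand side the candidate $\theta'$ is fixed, while the weights $w_{mk}(\theta^{(t_j)})$ converge to $w_{mk}(\theta^\star)$. If $r'_{mk}=0$ and $w_{mk}(\theta^{(t_j)})>0$, the left-hand side is $-\infty$ and the inequality holds trivially. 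Otherwise use the convention $0\log0=0$. Either way, taking the limsup gives
\[
Q_{\mathrm{true}}(\theta';\theta^\star)\le Q_{\mathrm{true}}(\theta^\star;\theta^\star),
\]
which is the claim of part (b).
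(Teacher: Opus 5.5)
\textbf{Where the argument breaks.} Your part (a) is the paper's argument, with an elementary monotone sequence in place of Robbins--Siegmund. Your route through part (b) bounds the realised surrogate gain by $L_{u^{(t)}}(\theta^{(t+1)})-L_{u^{(t)}}(\theta^{(t)})\to 0$, instead of arguing by contradiction via the Gibbs inequality as the paper does. It is sound up to step (iii), and it even avoids the paper's need to control $\log r^{(t_j+1)}_{mk}$ at the off-diagonal pair $(\theta^{(t_j+1)};\theta^{(t_j)})$.

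The gap is in step (iii), exactly in the case you call trivial: $r'_{mk}=0$ and $r^{(t_j)}_{mk}>0$ for all large $j$, with $r^{(t_j)}_{mk}\to r^{\star}_{mk}=0$. Along the subsequence $w_{mk}(\theta^{(t_j)})>0$, so $Q_{\mathrm{true}}(\theta';\theta^{(t_j)})=-\infty$. Every inequality you have therefore reads $-\infty\le(\text{finite})$, which is true but carries no information. At the limit, however, $w_{mk}(\theta^{\star})=0$, and under the convention $0\log 0=0$ that you (and the paper) adopt, $Q_{\mathrm{true}}(\theta';\theta^{\star})$ is finite. A limsup of $-\infty$ is $-\infty$, not $Q_{\mathrm{true}}(\theta';\theta^{\star})$. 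The passage needs lower semicontinuity of $\theta\mapsto Q_{\mathrm{true}}(\theta';\theta)$ at $\theta^{\star}$, and that fails here. Your $x\log x\to 0$ argument gives continuity only on the diagonal, where the weight and the logarithm carry the same routing.

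\textbf{Counterexample.} The step cannot be patched within the stated hypotheses, because the conclusion fails in that configuration. Take the following setup:
\begin{itemize}
\item $M=1$, $K=2$, $|\mathcal{C}_1|=1$, and $\mathcal{A}=\{A,A',B\}$;
\item $u^{(t)}=u_{\mathrm{true}}$, so $\varepsilon_t=0$;
\item losses constant in $\mathbf{r}$, with $u(A)=1$, $u(A')=\tfrac12$, $u(B)=2$;
\item start from $(A,B,(\tfrac12,\tfrac12))$ and set $\mathcal{M}^{(t)}=\{\theta^{(t)},\,(A,B,q^{(t)}),\,\theta'\}$ with $\theta'=(A',B,(1,0))$.
\end{itemize}
Because $q^{(t)}_{12}>0$, we have $\mathcal{F}_{u}(q^{(t)},\theta')=-\infty$, so the exact EM step is always selected. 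Then $r^{(t+1)}_{12}=r^{(t)}_{12}/\bigl(e(1-r^{(t)}_{12})+r^{(t)}_{12}\bigr)$ decreases to $0$, and $\theta^{(t)}\to\theta^{\star}=(A,B,(1,0))$. All four assumptions hold and $\theta'\in\mathcal{M}^{\infty}(\theta^{\star})$. Yet $w(\theta^{\star})=(1,0)$ gives $Q_{\mathrm{true}}(\theta';\theta^{\star})=-\tfrac12>-1=Q_{\mathrm{true}}(\theta^{\star};\theta^{\star})$.

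The paper's proof has the same hole. Its Step~3 asserts continuity of $Q_{\mathrm{true}}(\theta';\cdot)$ from $w_{mk}=O(r_{mk})$, which again is only a diagonal statement.

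\textbf{How to fix it.} The repair belongs in the statement. Restrict $\mathcal{M}^{\infty}(\theta^{\star})$ to candidates that lie in $\mathcal{M}^{(t_j)}$ \emph{and} have $\mathcal{F}_{u^{(t_j)}}(q^{(t_j)},\theta')>-\infty$ for infinitely many $j$. Equivalently, require $r'_{mk}>0$ wherever $r^{(t_j)}_{mk}>0$. This holds automatically for full-support routings, such as softmaxes of finite logits. Along that further subsequence, every left-hand term is either a fixed finite bracket times a convergent weight, or identically zero in both the sequence and the limit. Your steps (i)--(iii) then go through.
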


Part~(b) is stated relative to what the M-step can reach, because
Algorithm~\ref{alg:sgem} scores a freshly drawn pool of architectures
instead of all of $\mathcal{A}^K$ and moves $\mathbf{R}$ by gradient
ascent: no candidate the algorithm actually produces improves the true
$Q$-function at $\theta^{\star}$. Were the M-step exhaustive
($\mathcal{M}^{(t)} = \Theta$), this would be the classical stationarity
condition of EM. Short of that, drawing the pool i.i.d.\ from a
distribution with full support on the finite set $\mathcal{A}^K$ already
makes every architecture vector enter the pool infinitely often almost
surely, by the Borel--Cantelli lemma, so that
$\mathcal{M}^{\infty}(\theta^{\star})$ contains every architecture; for
the routing block, updated by gradient ascent, part~(b) is the
corresponding first-order condition.

The proof, together with the auxiliary lemmas (uniform deviation of
the ELBO, stability of the softmax responsibilities, and a
quasi-monotonicity bound
$L_{\mathrm{true}}(\theta^{(t+1)}) \geq L_{\mathrm{true}}(\theta^{(t)})
- 2N\varepsilon_t$ established via a Robbins--Siegmund argument),
is given in Appendix~\ref{sec:proof}.

\begin{remark}
The summability assumption~\ref{ass:summable} is the surrogate
analogue of the diminishing step-size condition in stochastic
EM~\cite{dempster1977em}; replacing it by the weaker condition
$\varepsilon_t\to 0$ preserves part~(b) of Theorem~\ref{thm:sgem} but
not part~(a). Practically, summability is achieved whenever the
S-step of Algorithm~\ref{alg:sgem} provides
sufficiently rich coverage of $\mathcal{A}\times[0,1]^M$.
\end{remark}

\textbf{Novelty and relation to classical convergence results.} The
classical theory of EM and its generalised
variant~\cite{dempster1977em, wu1983convergence} rests on the monotone
ascent of a single, \emph{fixed} likelihood --- a premise our setting
violates, since the objective optimised at iteration $t$ is read off the
re-fitted surrogate $u^{(t)}$ and is thus a moving target, while
$L_{\mathrm{true}}$ is never evaluated by the algorithm at all.
Theorem~\ref{thm:sgem} retains convergence nonetheless, with the
quasi-monotonicity bound above taking over the role of exact
monotonicity; this is the main theoretical contribution. It also differs
from stochastic and online EM analyses~\cite{cappe2009online}, where the
perturbation of the exact update is \emph{zero-mean stochastic noise}
tamed by a diminishing step-size schedule: here it is a
\emph{deterministic, bounded approximation error} of a learned
surrogate, and the role of the vanishing step size is played by the
summability condition~\ref{ass:summable}, whose smallness is secured by
active-learning coverage of the search space rather than by averaging
out noise. To our knowledge, this is the first convergence guarantee for
an EM-type architecture-search procedure whose per-iteration objective
is supplied by an \emph{adaptively trained} surrogate model.
 
\section{Experiments}
\label{sec:experiments}

We evaluate the proposed framework on three complementary problems:
(i) a controlled 2D toy task that allows us to verify whether the
recovered routing matches the ground-truth cluster structure,
(ii) a heterogeneous image-classification benchmark obtained by mixing
CIFAR-100 with SVHN, whose two sources give a sharp ground-truth domain
structure and hence a well-defined oracle routing, where we compare
against several MoE and NAS baselines, and (iii) a multi-domain time-series forecasting
benchmark that probes the framework in a harder, low-contrast regime
against a ladder of routing and architecture baselines.

\subsection{Toy Experiment}
\label{subsec:toy}

We sample $2000$ points in $\mathbb{R}^2$ from a four-class problem
consisting of one ``linear'' cloud (two classes separated by a
diagonal) and one ``ring'' cloud (two classes corresponding to an
inner and an outer ring), shown in Figure~\ref{fig:toy_dataset}. The
data is split $80/20$ into training/validation and clustered into
$M = 20$ $k$-means clusters (Figure~\ref{fig:toy_clusters}).
We use $K = 2$ experts so that the ideal solution is unambiguous: one
expert should specialise to the linear cloud and the other to the
ring cloud. The search space is a small cell-based space over the
$2$D inputs, in the spirit of \cite{liu2018darts}, whose operation set
is deliberately chosen so that the two clouds prefer \emph{different}
architectures: a \emph{linear} map
$\boldsymbol{x}\mapsto W\boldsymbol{x}$ already separates the linear
cloud but cannot model concentric rings, while a \emph{radial basis
function}
$\boldsymbol{x}\mapsto \exp\!\bigl(-\gamma\,\lVert\boldsymbol{x}-\boldsymbol{c}\rVert^2\bigr)$
separates the rings by radius and is needlessly poor on the linear
cloud (identity, element-wise squaring, additive shift, $\mathrm{ReLU}$
and dropout complete the set). The optimal mixture is therefore forced
to assign \emph{architecturally distinct} experts to the two regions,
so the toy task probes architectural specialisation, not merely the
routing.

SGEM (Algorithm~\ref{alg:sgem}) recovers the \emph{ideal} cluster split
($7$ ring / $13$ linear clusters) between the two experts in $20$ EM iterations
(Figure~\ref{fig:toy_sgem}), matching
the hand-defined oracle assignment that splits clusters according to
the centroid coordinate $x_0 = -2$. This sanity check confirms that
the surrogate-guided objective in~\eqref{eq:moe_objective} is well
posed and that the proposed optimiser solves it when the surrogate
is sufficiently accurate, in agreement with the convergence
guarantee of Theorem~\ref{thm:sgem}.

\begin{figure}[h]
  \centering
  \begin{subfigure}[b]{0.32\textwidth}
      \centering
      \includegraphics[width=\textwidth]{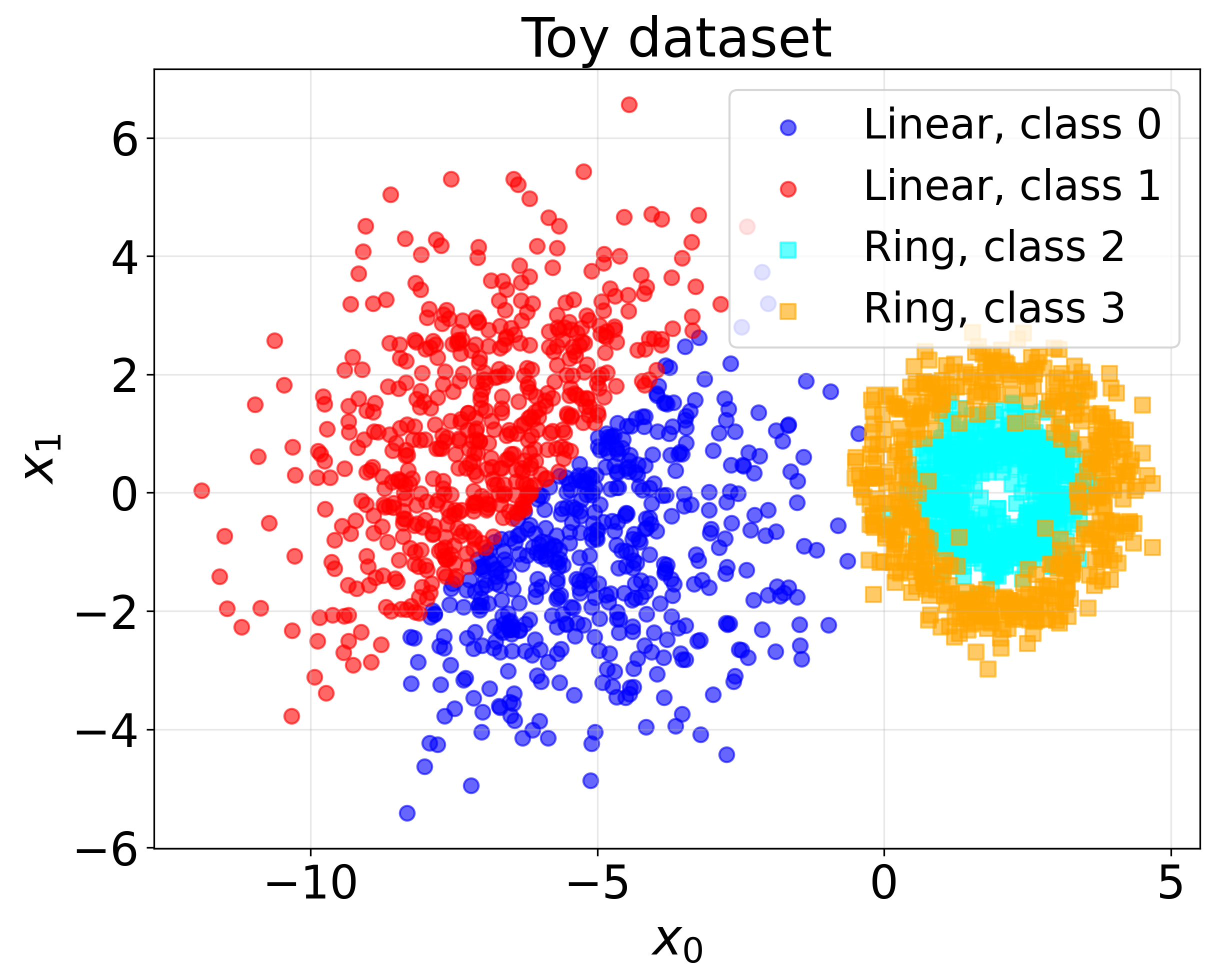}
      \captionsetup{justification=centering}
      \caption{}
      \label{fig:toy_dataset}
  \end{subfigure}
  \hfill
  \begin{subfigure}[b]{0.32\textwidth}
      \centering
      \includegraphics[width=\textwidth]{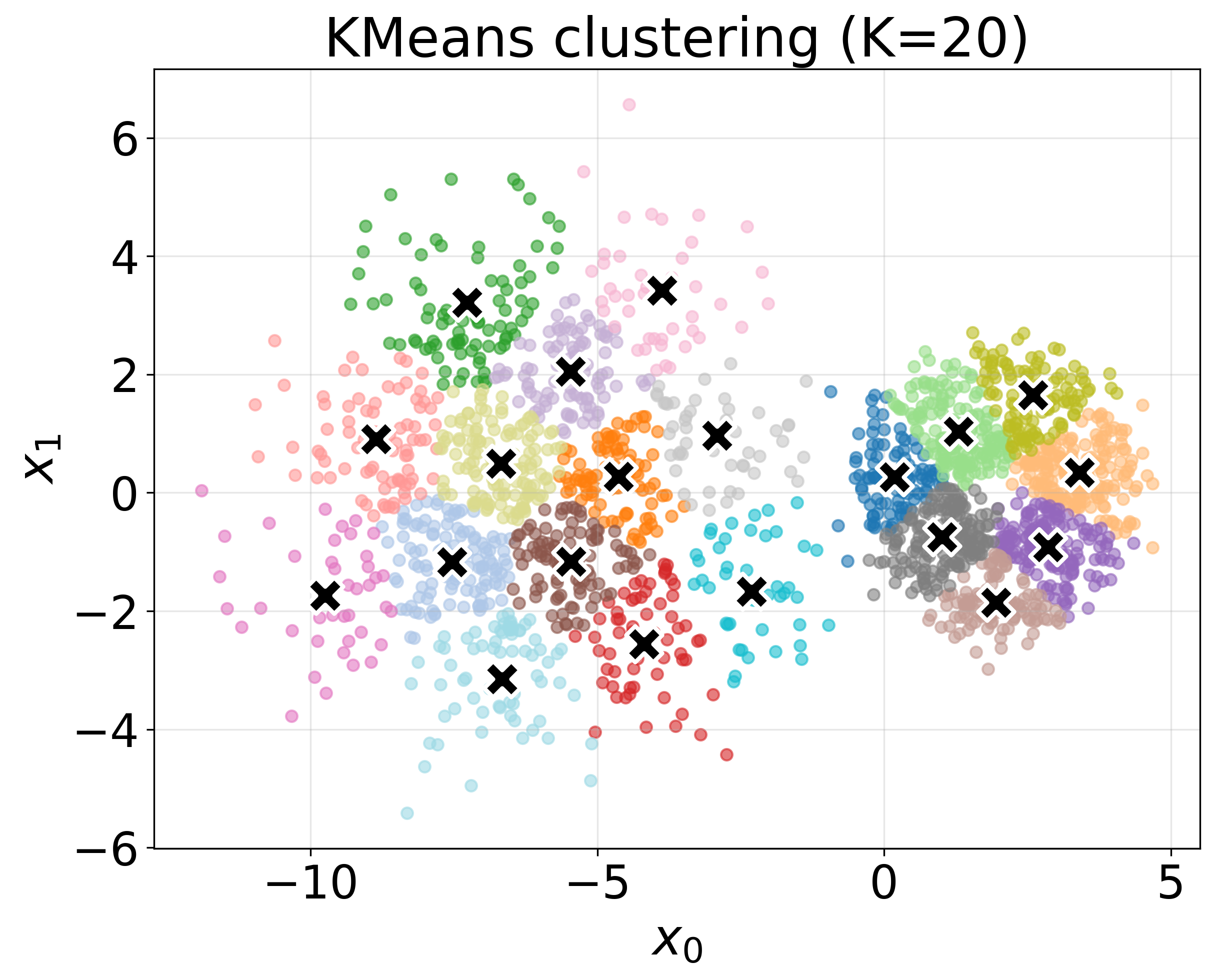}
      \captionsetup{justification=centering}
      \caption{}
      \label{fig:toy_clusters}
  \end{subfigure}
  \hfill
  \begin{subfigure}[b]{0.32\textwidth}
      \centering
      \includegraphics[width=\textwidth]{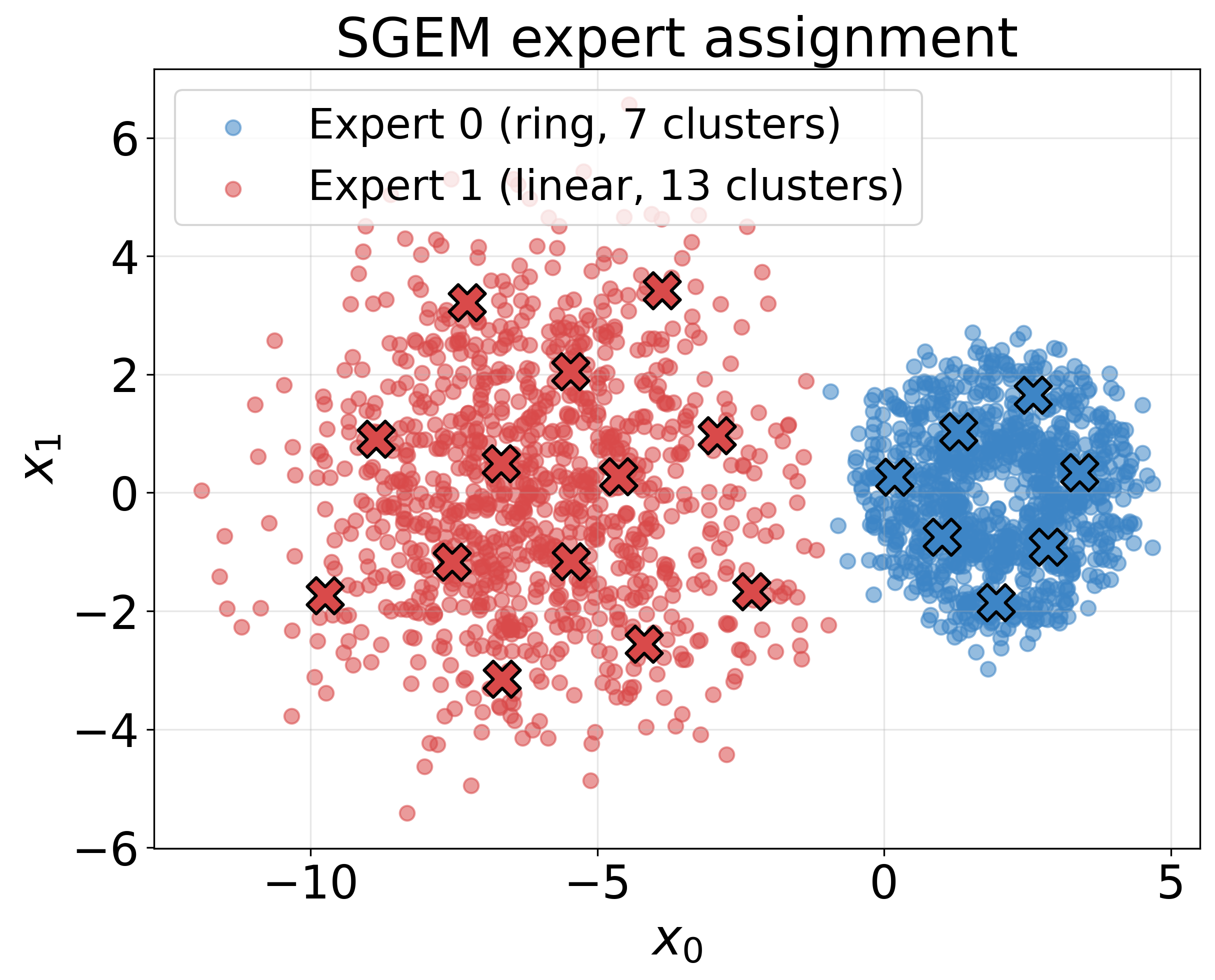}
      \captionsetup{justification=centering}
      \caption{}
      \label{fig:toy_sgem}
  \end{subfigure}
  \caption{Toy experiment. (a) The four-class dataset: a linear cloud
  (classes 0--1) and a ring cloud (classes 2--3). (b) The $M = 20$
  $k$-means clusters (crosses mark the centroids). (c) The expert
  assignment recovered by SGEM with $K = 2$: it splits the clusters
  exactly along the linear/ring boundary, matching the oracle
  partition up to a permutation of expert labels.}
  \label{fig:toy}
\end{figure}

\subsection{CIFAR-100 and SVHN mixture}
\label{subsec:cifar}

To test whether the cluster-aware objective recovers a meaningful
data partition on realistic data, we build a heterogeneous benchmark
by mixing CIFAR-100 and SVHN in equal proportions ($42\,000$ images
total). SVHN labels are shifted by $100$, so the combined problem has
$110$ classes. The two sources form a maximally separated domain
structure, which gives a well-defined oracle: with $K = 2$ experts the
ideal routing assigns each cluster to the expert specialised on its
source. We partition the data into $M = 30$ \emph{semantic} clusters
obtained from a pretrained feature extractor, using a three-way
train/validation/test split ($0.7/0.15/0.15$). The clustering is fit on the
training embeddings only; validation and test points are assigned to
the nearest centroid. The resulting oracle partition is
$19$ CIFAR-majority and $11$ SVHN-majority clusters, with a mean
source purity of $0.978$.

All MoE configurations use $K = 2$ experts with
\texttt{init\_channels = 16}. The base network is a small DARTS-style
architecture \texttt{CIFAR100Net} (stem, two normal cells, one fixed
reduction cell, global average pooling, fully connected head). The MoE
wrapper instantiates $K$ such networks combined through a fixed hard
\emph{cluster gate}: each cluster is routed to a single expert, so the
gate directly reflects the discovered data partition. Architecture
search uses the train/validation split; the final model is retrained
for $100$ epochs on $\text{train} \cup \text{validation}$ and evaluated
\emph{once} on the held-out test set. We report test accuracy averaged
over five random seeds. Training is performed inside a Docker
container with PyTorch~2.5.1 on a single RTX~3080~Ti GPU.

We compare against the following methods, all evaluated with the same
cluster gate:
\begin{itemize}
    \item \textbf{Random-MoE}: $200$ random architecture pairs are
    searched at $30$ epochs each on a random cluster split, then the
    best is retrained.
    \item \textbf{DARTS$\times K$}: $K$ experts that share the same
    DARTS-discovered architecture on a random cluster split.
    \item \textbf{SGEM} (ours): Algorithm~\ref{alg:sgem} with the
    exploration mechanisms and final refit
    of Section~\ref{subsec:sgem}, and the two
    data-collection measures described next. No
    source labels are used at any stage.
    \item \textbf{DARTS oracle MoE}: an \emph{upper bound} in which the
    cluster$\to$expert routing is fixed to the true source split and each
    expert uses a DARTS architecture searched on its own source. It uses
    the source labels and is therefore not a competitor but a ceiling on
    what perfect routing can achieve.
\end{itemize}

\textbf{Surrogate data collection.} Two obstacles stand between the
surrogate dataset and a usable routing signal, and each is met by a
domain-agnostic choice. \emph{Evaluation leakage}: scoring a candidate
on the whole validation set mixes in clusters it was never trained for
and collapses the predicted-loss gap between a specialised and a mixed
expert, which is the signal the routing relies on; we therefore
evaluate each candidate only on its own clusters. \emph{Mask
concentration}: including every cluster with probability $\tfrac12$
concentrates masks around $M/2$, so near-pure single-source subsets are
essentially never drawn ($2^{-19}$ for pure SVHN) and the surrogate
cannot learn that a pure expert attains a low loss; we therefore sample
the mask size \emph{log-uniformly} (minimum three clusters). Neither
measure consults the source labels, and both are needed: without them
accuracy falls to $\approx\!0.59$ at near-random source recovery
($\approx\!60\%$).

\begin{figure}[h]
    \centering
    \includegraphics[height=0.44\textheight]{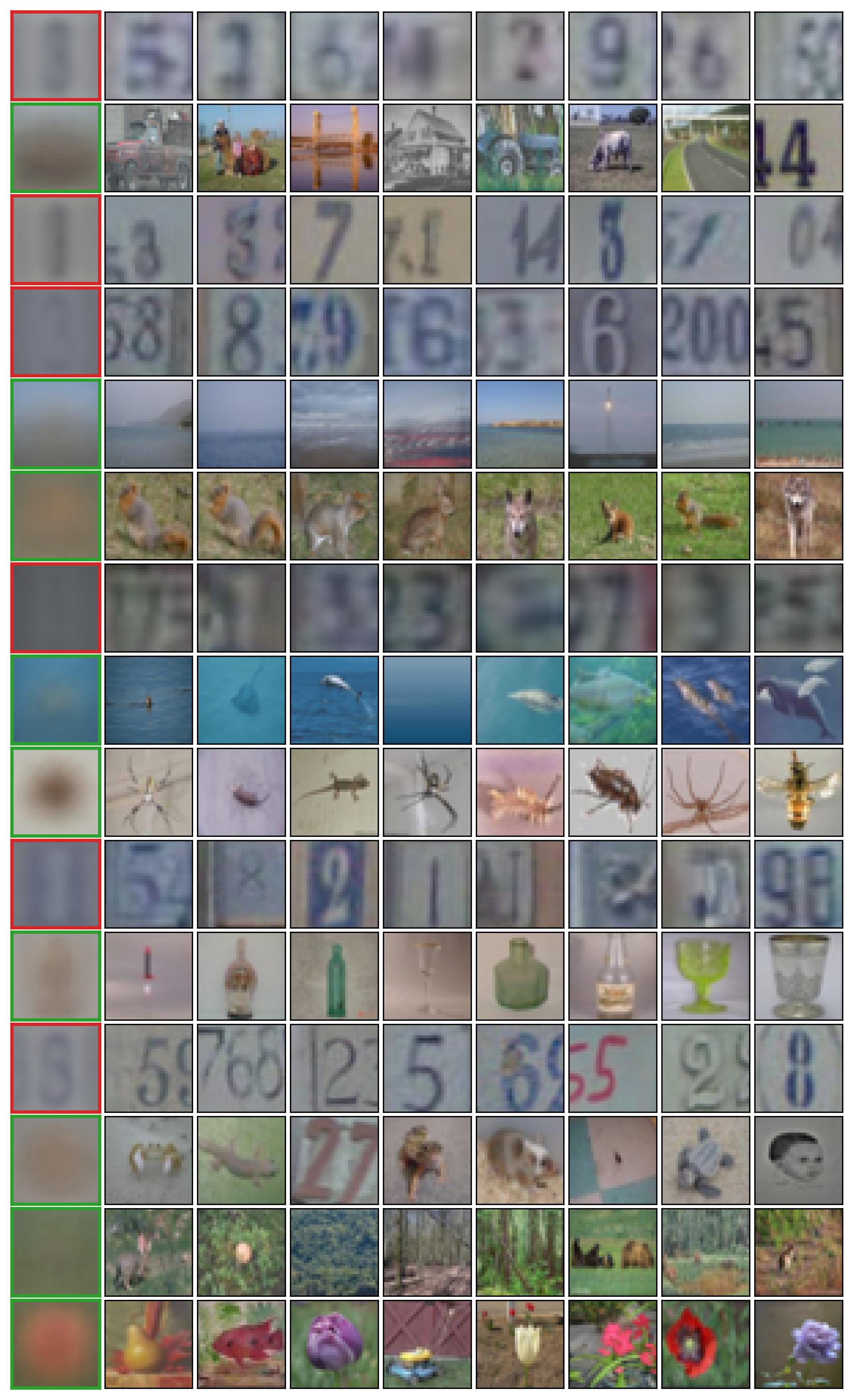}
    \caption{Per-cluster representatives of the CIFAR-100 / SVHN
    mixture under the SGEM routing ($K=2$, seed $322$; first $15$ of
    the $M = 30$ semantic clusters shown). Each row is one cluster: the
    leftmost column is the cluster medoid and the rest are the eight
    images nearest to the cluster mean. The medoid border is coloured by
    the expert SGEM assigns to the cluster (E0, the SVHN specialist, red;
    E1, the CIFAR specialist, green). SGEM sends every SVHN (digit)
    cluster to E0 and keeps E1 a pure CIFAR expert; the source-recovery
    rate is $93\%$ for this seed.}
    \label{fig:svhn_reps}
\end{figure}

Table~\ref{tab:cifar_main} reports test accuracy with cluster gating.
SGEM reaches $0.6402 \pm 0.002$ over five seeds,
outperforming DARTS$\times K$ ($0.5913 \pm 0.009$) and the random-MoE
baseline ($0.5551 \pm 0.031$) by $+4.9$ and $+8.5$ percentage points, and
even exceeding the DARTS oracle MoE ($0.6317 \pm 0.004$). The oracle row
is \emph{given} the true source split but keeps DARTS-searched expert
architectures, so it bounds what routing alone can achieve at a fixed
architecture quality; the per-region final refit of SGEM finds stronger
expert architectures and thereby passes it. The cluster gate is what
isolates routing quality: under a learnable gate the methods become
nearly indistinguishable (all $\approx\!0.63$), because a trainable gate
can compensate for a suboptimal partition and so masks the benefit of a
good one. A finer ablation that decomposes this accuracy into a routing
and an architecture contribution is reported in
Appendix~\ref{app:decomp}: the gain stems almost entirely from the
cluster$\to$expert routing, while architectural heterogeneity between
experts adds little once the routing is correct, and learning the
routing inside SGEM (rather than freezing it) is what recovers the
source structure.

\begin{table}[h]
\centering
\caption{Test accuracy on the CIFAR-100 / SVHN mixture with cluster
gating ($M=30$, $K=2$, semantic clustering, mean $\pm$ std over five
seeds). The DARTS
oracle MoE uses the ground-truth source split with DARTS-searched
architectures: it is a routing upper bound at fixed architecture
quality, not a competitor.}
\label{tab:cifar_main}
\begin{tabular}{lc}
\toprule
Method & test acc.\ (cluster gate) \\
\midrule
Random-MoE & $0.5551 \pm 0.031$ \\
DARTS$\times K$ (same arch) & $0.5913 \pm 0.009$ \\
\textbf{SGEM (ours)} & $\mathbf{0.6402 \pm 0.002}$ \\
\midrule
DARTS oracle MoE \emph{(uses source labels)} & $0.6317 \pm 0.004$ \\
\bottomrule
\end{tabular}
\end{table}

Because the oracle partition by data source is sharp (mean purity
$0.978$), we can directly check whether the learned routing aligns with
it. Across five seeds the SGEM hard assignment agrees with
the oracle source split on $95\% \pm 2\%$ of clusters (best of the two
label permutations) --- far above the $\approx\!54\%$ of a balanced
random split and a large jump over the $\approx\!60\%$ reached without
the two data-collection measures above. In a typical run (seed $322$, Figure~\ref{fig:svhn_reps}) one
expert becomes a \emph{pure} CIFAR specialist ($17$ clusters, no SVHN)
while the other absorbs all $11$ SVHN clusters together with only $2$
CIFAR ones. The cluster-aware objective therefore \emph{does} discover
the underlying data domains on this benchmark; the small residual gap to the oracle stems from a few
genuinely mixed clusters and from objective noise near the optimum: the
iterate selected by best objective occasionally misses a perfect
assignment --- all $30$ clusters matching the oracle --- that was
attained one iteration earlier.

Empirically the routing converges fast: on every seed the
source-recovery rate jumps from its initial $53$--$67\%$ to
$\approx\!97\%$ within the first two EM iterations and then remains on a
$0.93$--$0.97$ plateau, illustrating the convergence guaranteed by
Theorem~\ref{thm:sgem}; Appendix~\ref{app:convergence} traces a single
run iteration by iteration.

\subsection{Time-Series Forecasting over a Multi-Domain Mixture}
\label{subsec:ts}

The CIFAR-100/SVHN benchmark has two domains of very different
difficulty, which makes the domain structure easy for the likelihood to
reward. To probe the framework in a harder, low-contrast regime, we
turn to probabilistic
time-series forecasting over a mixture of four Monash
\cite{godahewa2021monash} domains: hourly electricity demand, air
quality (KDD Cup 2018), weather, and monthly tourism. We sample
$40\,000$ sliding windows with context length $L = 96$ and forecast
horizon $H = 24$, and cluster them into $M = 30$ $k$-means clusters
computed from simple statistical descriptors of each window (no
pretrained embeddings and no domain labels). The clusters align with the
domains only approximately (mean source purity $0.864$; the oracle
partition has $5/5/8/12$ clusters per domain), and, unlike SVHN, all
four domains are forecasting tasks of comparable difficulty. We use
$K = 4$ experts. Each expert is a small cell-based forecasting network
drawn from a search space over the six time-series operations of
Table~\ref{tab:ts_ops} (moving average, dilated convolution, GRU, patch
attention, a linear trend and a skip connection) that emits
the parameters of a per-step Student-$t$ predictive distribution. The
data protocol mirrors Section~\ref{subsec:cifar}: a three-way
train/validation/test split, search on train/validation only, and a
final MoE retrained for $50$ epochs on
$\text{train} \cup \text{validation}$ and evaluated \emph{once} on the
held-out test set. We report the test negative log-likelihood (NLL) and
the sample-based continuous ranked probability score (CRPS), both
averaged over $5$ seeds (lower is better).

\textbf{Exploration.} Because the four domains are of similar
difficulty, this benchmark is where the exploration mechanisms of
Section~\ref{subsec:sgem} matter most: without them the masks
concentrate around the current routing and the surrogate drifts towards
confirming it. All runs below use the exploration masks, the partition
jumps and the final per-region refit, with no domain labels at any
stage.

\textbf{Baselines.} We compare against a ladder of baselines that
isolate the individual design choices; all MoE methods use $K = 4$ and
the same final-training protocol, and all label-free methods see no
domain information. \emph{Global single} trains one network ($K = 1$)
found by random search. \emph{Random-MoE} uses random architectures on
a random cluster partition. \emph{BestArch$\times K$} uses $K$ copies
of the best single architecture (the analogue of DARTS$\times K$ in
Section~\ref{subsec:cifar}). \emph{Geometry routing} assigns clusters
to experts by $k$-means over cluster centroids, with $K$ copies of the
best single architecture. \emph{NAS-MoE} equips a random fixed
partition with $K$ independently searched architectures.
\emph{MoLE}~\cite{ni2024mole} is a mixture of linear experts with a
learnable gate. Finally, an \emph{oracle} method uses the ground-truth
domain labels and bounds what perfect routing can achieve:
\emph{per-domain specialists} route each domain to its own expert and
search a separate architecture for every domain. Simple
statistical baselines (naive, seasonal-naive, DLinear) are far behind
all MoE variants (test NLL $2.01$, $1.91$ and $1.80$ respectively) and
are omitted from the table.

\begin{table}[h]
\centering
\caption{Test NLL and CRPS on the four-domain Monash mixture
($M = 30$, $K = 4$, mean $\pm$ std over $5$ seeds; lower is better).
Bold marks the best entry of the first block, whose methods all use the
cluster gate and no domain labels and are therefore directly comparable.
MoLE is separated out because it routes with its own learnable gate; the
oracle row uses the ground-truth domain labels and is an upper bound,
not a competitor. NLL is the primary metric here, the framework being
likelihood-based throughout.}
\label{tab:ts_main}
\setlength{\tabcolsep}{4.5pt}
\begin{tabular}{lcc}
\toprule
Method & NLL & CRPS \\
\midrule
\multicolumn{3}{l}{\textit{cluster gate, no domain labels}} \\
\quad Global single ($K{=}1$) & $1.070 \pm .038$ & $0.627 \pm .022$ \\
\quad Random-MoE & $1.026 \pm .100$ & $0.630 \pm .045$ \\
\quad Geometry routing & $1.058 \pm .018$ & $0.619 \pm .008$ \\
\quad BestArch$\times K$ & $0.925 \pm .039$ & $0.588 \pm .023$ \\
\quad NAS-MoE & $0.908 \pm .038$ & $0.580 \pm .012$ \\
\quad \textbf{SGEM (ours)} & $\mathbf{0.883 \pm .026}$ & $\mathbf{0.574 \pm .013}$ \\
\midrule
\multicolumn{3}{l}{\textit{own learnable routing}} \\
\quad MoLE~\cite{ni2024mole} & $1.458 \pm .016$ & $0.558 \pm .007$ \\
\midrule
\multicolumn{3}{l}{\textit{oracle, uses domain labels}} \\
\quad Per-domain specialists & $0.789 \pm .028$ & $0.549 \pm .015$ \\
\bottomrule
\end{tabular}
\end{table}

Table~\ref{tab:ts_main} summarises the results. Under the cluster
gate --- the metric that isolates routing quality --- SGEM is the best
method that uses no domain labels: it improves on
BestArch$\times K$ by $0.042$ NLL, on NAS-MoE by $0.025$, and on
geometry routing by a wide $0.175$ margin, showing that likelihood-driven
routing extracts structure that centroid geometry does not.

\textbf{Where the two scores disagree.} MoLE is the one entry that is
not dominated: it attains the lowest CRPS outside the oracle ($0.558$
against $0.574$ for SGEM) while being the worst on NLL by at least
$0.38$ nats. The two scores are not interchangeable --- CRPS is computed
from samples and rewards a well-placed predictive median with a roughly
correct spread, whereas NLL is evaluated on the predictive density and
additionally punishes a mis-calibrated shape --- and we treat NLL as the
primary criterion, since it measures precisely what the
likelihood-based objective of Section~\ref{subsec:objective} optimises.
The comparison is indirect in any case, MoLE bringing its own learnable
gate rather than the cluster gate of the rest of the table
(Section~\ref{subsec:cifar}).

\textbf{What each domain asks of the architecture.} The search already
leaves behind the material for a post-hoc answer, at no extra training
cost: every surrogate observation is a triple (architecture, training
mask, per-cluster loss). Restricting to cluster/observation pairs in
which the cluster was part of the training mask, and standardising the
losses \emph{within} each domain --- which removes the overall strength
of an architecture and leaves only the domain's preference --- the
effect of an operation is the difference in mean standardised loss
between architectures that contain it and those that do not. Negative
means the operation helps. Table~\ref{tab:ts_ops} reports these effects
over the $843$ observations collected across two independent seeds
($6226$ pairs); bold marks effects whose $95\%$ bootstrap interval
excludes zero.

\begin{table}[h]
\centering
\caption{Effect of each operation on the standardised region loss, per
domain (negative = the operation helps). Bold: $95\%$ bootstrap interval
over observations excludes zero. The last row is how much the domain
depends on the architecture at all: the spread of the real region NLL
over $50$ random architectures under oracle routing.}
\label{tab:ts_ops}
\setlength{\tabcolsep}{5pt}
\begin{tabular}{lcccc}
\toprule
Operation & electricity & KDD Cup & weather & tourism \\
\midrule
dilated conv    & $\mathbf{-0.59}$ & $-0.00$          & $\mathbf{+0.15}$ & $\mathbf{-0.21}$ \\
moving average  & $\mathbf{-0.46}$ & $\mathbf{-0.27}$ & $+0.03$          & $\mathbf{-0.35}$ \\
GRU             & $\mathbf{-0.43}$ & $-0.06$          & $\mathbf{-0.10}$ & $-0.06$ \\
patch attention & $\mathbf{-0.23}$ & $\mathbf{-0.40}$ & $\mathbf{-0.12}$ & $-0.03$ \\
linear trend    & $+0.07$          & $\mathbf{+0.20}$ & $\mathbf{-0.12}$ & $\mathbf{+0.22}$ \\
skip connection & $\mathbf{+0.49}$ & $+0.05$          & $+0.04$          & $\mathbf{+0.22}$ \\
\midrule
architecture spread (std NLL) & $0.431$ & $0.118$ & $0.095$ & $0.128$ \\
\bottomrule
\end{tabular}
\end{table}

The preferences are readable and, for some pairs of domains, opposed.
Hourly electricity --- by far the most architecture-sensitive domain,
with a threefold spread between the best and the worst architecture ---
wants a wide receptive field and memory (dilated convolution, GRU,
moving average) and is hurt badly by a bare skip connection, whereas
weather is the one domain that dilated convolution \emph{hurts} and a
linear trend helps. Ranking the six operations by their effect, the two
domains disagree ($\rho = -0.37$), as do weather and tourism
($\rho = -0.26$), while electricity and tourism nearly coincide
($\rho = +0.94$). The disagreement is nevertheless partial: the useful
operations overlap enough --- moving average helps three domains out of
four --- that a single well-chosen architecture remains adequate
everywhere.

Notably, no such disagreement exists on CIFAR-100/SVHN. Repeating the
analysis there, the two sources rank the four operations identically
($\rho = +1$): both prefer the $5\times5$ separable convolution and both
suffer from a bare skip connection, and only the magnitude differs
(SVHN reacts about seven times more strongly). On that benchmark the
architectural preference is a property of the task, not of the domain,
which is consistent with the routing --- and not the architecture ---
carrying the gain there as well.

Unlike on CIFAR-100/SVHN, however, SGEM does not close the gap to the
oracle ceiling on the test set ($0.883$ vs.\ $0.789$ NLL), and its hard
routing recovers the domain partition only marginally better than
chance ($45\%$ mean Hungarian agreement against a $40\%$ random-split
base). This is not an optimiser failure but a property of the
benchmark: an equal-budget ablation shows that the \emph{real}
objective values of the partitions found by SGEM match or exceed the
value of the oracle domain partition on all five seeds, i.e.\ the
likelihood landscape of this low-contrast mixture has near-optimal
partitions that do not coincide with the domain boundaries (the
domain-diagonal NLL gap is only $\approx\!0.09$ nats at $M = 30$ and
vanishes under a finer clustering). The benchmark thus delineates the
scope of domain discovery: when domains differ sharply in difficulty
(CIFAR vs.\ SVHN), the likelihood optimum coincides with the domain
partition and SGEM recovers it; when all domains are comparably hard,
many partitions are equally good and domain recovery is no longer the
right success criterion, while the end-metric advantage of SGEM over
its label-free competitors persists.
 
\section{Conclusion}
\label{sec:conclusion}

We presented a structure-aware architecture search framework for
Mixture-of-Experts models, in which the assignment of data clusters to
experts is treated as a search variable alongside the per-expert
architectures. Writing the cluster-aware objective as the
incomplete-data likelihood of a latent-variable mixture turns the search
into an Expectation--Maximisation problem, solved by SGEM
(Algorithm~\ref{alg:sgem}) with the otherwise intractable expert-quality
term supplied by a vector-valued surrogate refined inside an
active-learning loop. Theorem~\ref{thm:sgem} shows that the procedure
converges even though that objective is re-fitted at every iteration and
the true likelihood is never evaluated by the algorithm.

SGEM recovers the ideal split on the controlled toy problem. On the
deliberately heterogeneous CIFAR-100 / SVHN mixture it recovers the
ground-truth source partition on $95\%$ of clusters \emph{without ever
seeing the source labels} and attains $0.640$ cluster-gate test
accuracy, ahead of every label-free baseline and of the oracle MoE
($0.632$) that is handed the true split but keeps DARTS-searched expert
architectures; region-restricted evaluation and a domain-agnostic
log-uniform mask sampler are what expose the surrogate to the near-pure
subsets this requires, and without them the routing collapses. On the
four-domain time-series mixture SGEM is again the best label-free method
under the cluster gate (test NLL $0.883$).

Domain recovery, however, is the right success criterion only where the
domains differ sharply in difficulty: on the low-contrast time-series
mixture the partitions SGEM finds match the oracle domain partition in
real objective value while agreeing with it barely above chance. The
remaining gap to the oracle ceiling there ($0.789$ NLL) indicates that
matching the validation objective does not yet guarantee matching
test-time generalisation, which we regard as the most interesting open
question. Scaling the search space beyond the present $K \le 4$,
widening the seed sweep, and extending the framework to
transformer-based MoE layers remain natural next steps.

\section*{CRediT authorship contribution statement}
\textbf{Petr Babkin:} Conceptualization, Methodology, Software, Investigation, Writing -- original draft.
\textbf{Oleg Bakhteev:} Conceptualization, Methodology, Supervision, Writing -- review \& editing.

\section*{Declaration of competing interest}
The authors declare that they have no known competing financial interests or personal relationships that could have appeared to influence the work reported in this paper.

\section*{Data availability}
All datasets used in this work are either publicly available (CIFAR-100, SVHN, Monash Forecasting Archive) or synthetically generated by scripts included in the code repository at \coderepo. The source code of the proposed method and of all experiments is publicly available at the same repository.

\appendix
\renewcommand{\thesection}{\Alph{section}}
\renewcommand{\thesubsection}{\Alph{section}.\arabic{subsection}}

\section{Proof of Theorem~\ref{thm:sgem}}
\label{sec:proof}

This appendix collects the lemmas and the full proof of
Theorem~\ref{thm:sgem}. We follow the notation of
Section~\ref{subsec:theory}: $\theta = (\boldsymbol{\alpha}_{1:K}, \mathbf{R})
\in \Theta = \mathcal{A}^K \times (\Delta_K)^M$ with $\mathcal{A}$
finite, $\varepsilon_t$ is the per-iteration surrogate error
(in the loss scale; see~\eqref{eq:eps_t}), $N = \sum_m |\mathcal{C}_m|$,
and the variational free energy
parameterised by a vector-valued loss function $\boldsymbol{u}$ is
\begin{equation*}
\mathcal{F}_u(q, \theta) \;=\; \sum_{m=1}^{M} |\mathcal{C}_m|
\sum_{k=1}^{K} q_{mk}\, \log \frac{r_{mk}\, e^{-u_m(\boldsymbol{\alpha}_k, \mathbf{r}_k)}}{q_{mk}}.
\end{equation*}
We write $L_u(\theta) = \max_q \mathcal{F}_u(q, \theta)$ for the
size-weighted log-likelihood induced by loss $\boldsymbol{u}$, and abbreviate
$L_{\mathrm{true}} = L_{u_{\mathrm{true}}}$,
$L^{(t)} = L_{u^{(t)}}$. Every argument below is carried out row-wise in
$m$; the evaluation index $m$ is inert throughout, so the proof applies
verbatim to any surrogate satisfying Assumption~\ref{ass:bounded}.

\subsection{Preliminary Lemmas}

\begin{proposition}[ELBO maximisation]
\label{prop:elbo}
For every loss function
$\boldsymbol{u}: \mathcal{A} \times [0, 1]^M \to \mathbb{R}_{\geq 0}^{M}$
and every $\theta\in\Theta$,
$L_u(\theta) = \max_{q\in\mathcal{Q}} \mathcal{F}_u(q, \theta)$ with
the maximum attained at
\begin{equation*}
q^{\star}_{mk}(\theta) \;=\; \frac{r_{mk}\,e^{-u_m(\boldsymbol{\alpha}_k, \mathbf{r}_k)}}
{\sum_{j} r_{mj}\,e^{-u_m(\boldsymbol{\alpha}_j, \mathbf{r}_j)}}.
\end{equation*}
\end{proposition}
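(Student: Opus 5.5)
The proof is the standard Gibbs-inequality argument, carried out row by row in $m$. Fix $\theta$ and $\boldsymbol{u}$ and abbreviate $a_{mk} = r_{mk}\,e^{-u_m(\boldsymbol{\alpha}_k, \mathbf{r}_k)} \ge 0$ and $Z_m = \sum_j a_{mj}$. The first thing to note is that the $u_m$ values entering $a_{mk}$ depend only on $\theta$, because the mask argument $\mathbf{r}_k$ is a column of $\mathbf{R}$, not of $q$. So for fixed $\theta$ the free energy $\mathcal{F}_u(q,\theta)$ is a sum over $m$ of functions of the separate rows $q_{m\cdot} \in \Delta_K$. Maximising over $\mathcal{Q} = (\Delta_K)^M$ therefore splits into $M$ independent problems, each weighted by the positive constant $|\mathcal{C}_m|$.

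For a single row, Assumption~\ref{ass:bounded} gives $e^{-u_m} \ge e^{-U_{\max}} > 0$, and every row of $\mathbf{R}$ lies on the simplex, so some $r_{mk} > 0$. Hence $Z_m > 0$ and $q^\star_{m\cdot} = a_{m\cdot}/Z_m$ is a well-defined distribution. The key identity is
\begin{equation*}
\sum_k q_{mk}\log\frac{a_{mk}}{q_{mk}} \;=\; \log Z_m \;-\; \mathrm{KL}\bigl(q_{m\cdot}\,\big\|\,q^\star_{m\cdot}\bigr),
\end{equation*}
obtained by writing $a_{mk} = Z_m q^\star_{mk}$. The Kullback--Leibler term is nonnegative and vanishes exactly when $q_{m\cdot} = q^\star_{m\cdot}$. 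Summing over $m$ gives $\max_q \mathcal{F}_u(q,\theta) = \sum_m |\mathcal{C}_m| \log Z_m$, attained at $q^\star(\theta)$, which is the stated formula.

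Finally I identify this value with $L_u(\theta)$. The paper defines $L_u$ as $\max_q \mathcal{F}_u$, so the content of the proposition is really the attainment and the closed form. I will also observe that the maximum equals the log-likelihood \eqref{eq:moe_objective}, respectively \eqref{eq:Ltrue} for $\boldsymbol{u}_{\mathrm{true}}$. This confirms that the notation is consistent with $\mathcal{J}$ and with $L_{\mathrm{true}}$.

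The only delicate step is the boundary behaviour, where some $r_{mk} = 0$, which forces $a_{mk} = 0$. I handle it with the conventions $0\log(0/0) = 0$ and $q\log(0/q) = -\infty$ for $q > 0$. Under these conventions any $q$ that puts mass where $a_{mk} = 0$ gives $\mathcal{F}_u = -\infty$. So the maximisation effectively runs over the support of $a_{m\cdot}$, and the KL identity holds there verbatim. Since $q^\star$ is supported exactly on that set, the argument goes through unchanged.
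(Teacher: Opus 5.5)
Your proof is correct and is essentially the paper's own argument. You split the free energy row by row in $m$, rewrite each row as $\log Z_m - \mathrm{KL}(q_{m\cdot}\,\|\,q^\star_{m\cdot})$ using $\sum_k q_{mk}=1$, and conclude from nonnegativity of the KL divergence that $L_u(\theta)=\sum_m |\mathcal{C}_m|\log Z_m$.

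Your extra checks are details the paper leaves implicit: that $\mathbf{r}_k$ comes from $\theta$ rather than $q$, that $Z_m>0$, and the conventions at the $r_{mk}=0$ boundary. One small correction: $Z_m>0$ needs no appeal to Assumption~\ref{ass:bounded}, since $e^{-u_m}>0$ for any real-valued $u$. This matters because the proposition is stated for every loss function, not only bounded ones.
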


\begin{proof}
Fix $\theta$ and $m$. The factor $|\mathcal{C}_m|$ is a positive
constant; it can be pulled out of the inner maximisation in $q_{m,:}$
without affecting the optimum. Writing $a_{mk} = r_{mk}\,
e^{-u_m(\boldsymbol{\alpha}_k, \mathbf{r}_k)}$ and using
$\sum_k q_{mk} = 1$, the inner expression equals
$-\mathrm{KL}(q_{m,:} \| a_{m,:}/\!\sum_j a_{mj}) + \log\sum_j a_{mj}$.
The KL-divergence is non-negative and vanishes exactly at
$q_{mk} = a_{mk}/\sum_j a_{mj}$, which yields the claim. Summing
over $m$ with weights $|\mathcal{C}_m|$ recovers
$L_u(\theta) = \sum_m |\mathcal{C}_m|\,\log\sum_k a_{mk}$.
\end{proof}

A consequence is the Neal--Hinton decomposition
$L_u(\theta) = \mathcal{F}_u(q, \theta) + \sum_m |\mathcal{C}_m|\,
\mathrm{KL}\!\bigl(q_{m,:}\|q^{\star}_{m,:}(\theta)\bigr)$,
implying $L_u(\theta) \geq \mathcal{F}_u(q,\theta)$ with equality iff
$q = q^{\star}(\theta)$.

\begin{lemma}[Uniform ELBO deviation]
\label{lem:elbo-dev}
Under Assumption~\ref{ass:bounded}, for every $q$, $\theta$, $t$,
$\bigl|\mathcal{F}_{u_{\mathrm{true}}}(q,\theta) -
\mathcal{F}_{u^{(t)}}(q,\theta)\bigr| \leq N\,\varepsilon_t$.
\end{lemma}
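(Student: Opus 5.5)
The plan is to exploit that the two free energies share every term except the loss term, so their difference is linear in $\boldsymbol{u}$. Fix $q$, $\theta$ and $t$. Expanding the logarithm in the definition of $\mathcal{F}_u(q,\theta)$ gives
\begin{equation*}
\mathcal{F}_u(q,\theta) = \sum_{m=1}^{M} |\mathcal{C}_m| \sum_{k=1}^{K} q_{mk}\bigl[\log r_{mk} - \log q_{mk}\bigr] \;-\; \sum_{m=1}^{M} |\mathcal{C}_m| \sum_{k=1}^{K} q_{mk}\, u_m(\boldsymbol{\alpha}_k, \mathbf{r}_k).
\end{equation*}
The first double sum depends only on $(q,\mathbf{R})$ and not on $\boldsymbol{u}$. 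It is identical for $u_{\mathrm{true}}$ and $u^{(t)}$ and cancels in the difference. The only care needed is the convention $0\log 0 = 0$ and the case $r_{mk}=0$, $q_{mk}>0$, where both free energies equal $-\infty$. I will restrict to $q$ absolutely continuous with respect to $\mathbf{R}$ row-wise (the only $q$ for which $\mathcal{F}$ is finite, and the only ones the algorithm produces, since E-step responsibilities vanish where $r_{mk}$ does). Boundedness of $u$ from Assumption~\ref{ass:bounded} ensures the loss term is always finite, so no $\infty-\infty$ arises.

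Next I take the difference and apply the triangle inequality:
\begin{equation*}
\bigl|\mathcal{F}_{u_{\mathrm{true}}}(q,\theta)-\mathcal{F}_{u^{(t)}}(q,\theta)\bigr| \le \sum_{m} |\mathcal{C}_m| \sum_{k} q_{mk}\,\bigl|u^{(t)}_m(\boldsymbol{\alpha}_k,\mathbf{r}_k)-u_{\mathrm{true},m}(\boldsymbol{\alpha}_k,\mathbf{r}_k)\bigr|.
\end{equation*}
Each absolute difference is at most $\varepsilon_t$ by the sup-norm definition~\eqref{eq:eps_t}, since $(\boldsymbol{\alpha}_k,\mathbf{r}_k,m)$ ranges within $\mathcal{A}\times[0,1]^M\times\{1,\dots,M\}$. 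Using $\sum_k q_{mk}=1$ and $\sum_m|\mathcal{C}_m| = N$ then gives the bound $N\varepsilon_t$.

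There is no real obstacle; the only delicate point is the bookkeeping of infinite values at the boundary of the simplex, handled above. Assumption~\ref{ass:bounded} is used to make $\varepsilon_t$ and all terms finite.
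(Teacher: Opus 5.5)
Your proposal is correct and matches the paper's proof essentially line for line. The $\log r_{mk}$ and $-\log q_{mk}$ terms cancel, each remaining bracket is bounded by $\varepsilon_t$ through the supremum in~\eqref{eq:eps_t}, and $\sum_k q_{mk}=1$ together with $\sum_m |\mathcal{C}_m| = N$ gives $N\varepsilon_t$. Your remark on the boundary case $r_{mk}=0<q_{mk}$, where both free energies equal $-\infty$, is a sensible clarification that the paper leaves implicit.
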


\begin{proof}
The terms $\log r_{mk}$ and $-\log q_{mk}$ cancel between the two
ELBOs, leaving
\begin{equation*}
\mathcal{F}_{u_{\mathrm{true}}} - \mathcal{F}_{u^{(t)}}
\;=\; \sum_m |\mathcal{C}_m| \sum_k q_{mk}\,
\bigl[u^{(t)}_m(\boldsymbol{\alpha}_k, \mathbf{r}_k) -
u_{\mathrm{true}, m}(\boldsymbol{\alpha}_k, \mathbf{r}_k)\bigr].
\end{equation*}
Bounding the inner bracket by $\varepsilon_t$ --- the sup
in~\eqref{eq:eps_t} ranges over $m$ as well --- and using
$\sum_k q_{mk} = 1$ gives
$|\mathcal{F}_{u_{\mathrm{true}}} - \mathcal{F}_{u^{(t)}}|
\leq \varepsilon_t \sum_m |\mathcal{C}_m| = N\,\varepsilon_t$.
\end{proof}

\begin{corollary}
\label{cor:lik-dev}
For every $\theta$, $|L_{\mathrm{true}}(\theta) - L^{(t)}(\theta)|
\leq N\,\varepsilon_t$.
\end{corollary}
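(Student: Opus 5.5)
The statement to prove is Corollary~\ref{cor:lik-dev}. The plan is to obtain it directly from Proposition~\ref{prop:elbo} and Lemma~\ref{lem:elbo-dev}, by the standard observation that the difference of two maxima is bounded by the supremum of the pointwise difference of the functions being maximised.

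Fix $\theta\in\Theta$ and $t$. By Proposition~\ref{prop:elbo}, $L_{\mathrm{true}}(\theta)=\mathcal{F}_{u_{\mathrm{true}}}(q^{\mathrm{true}},\theta)$, where $q^{\mathrm{true}}=q^\star(\theta)$ is computed under $u_{\mathrm{true}}$. Likewise $L^{(t)}(\theta)=\mathcal{F}_{u^{(t)}}(q^{(t)\star},\theta)$, with $q^{(t)\star}$ the responsibilities computed under $u^{(t)}$. Both maxima are attained, since the proposition gives the maximiser explicitly.

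\textbf{Upper bound.} Since $L^{(t)}(\theta)$ is the maximum of $\mathcal{F}_{u^{(t)}}(\cdot,\theta)$, we have $L^{(t)}(\theta)\ge \mathcal{F}_{u^{(t)}}(q^{\mathrm{true}},\theta)$. Lemma~\ref{lem:elbo-dev}, applied with $q=q^{\mathrm{true}}$, gives $\mathcal{F}_{u^{(t)}}(q^{\mathrm{true}},\theta)\ge \mathcal{F}_{u_{\mathrm{true}}}(q^{\mathrm{true}},\theta)-N\varepsilon_t$. Chaining the two,
\[
L^{(t)}(\theta)\;\ge\; L_{\mathrm{true}}(\theta)-N\varepsilon_t .
\]

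\textbf{Lower bound.} The argument is symmetric. Evaluating $\mathcal{F}_{u_{\mathrm{true}}}(\cdot,\theta)$ at $q^{(t)\star}$ gives
\[
L_{\mathrm{true}}(\theta)\;\ge\;\mathcal{F}_{u_{\mathrm{true}}}(q^{(t)\star},\theta)\;\ge\; L^{(t)}(\theta)-N\varepsilon_t .
\]

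Together the two inequalities give $|L_{\mathrm{true}}(\theta)-L^{(t)}(\theta)|\le N\varepsilon_t$.

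The only delicate point is well-definedness. Each $q^\star$ must lie in the admissible set $\mathcal{Q}$, and $\mathcal{F}$ must be finite there. Two cases arise:
\begin{itemize}
\item If some $r_{mk}=0$, the corresponding $q^\star_{mk}=0$, and the term $0\cdot\log(0/0)$ is read as $0$ by the usual convention. Both ELBOs then drop the same terms, so the cancellation in Lemma~\ref{lem:elbo-dev} still holds.
\item By Assumption~\ref{ass:bounded}, $e^{-u}\ge e^{-U_{\max}}>0$, so the normalisers $\sum_j r_{mj}e^{-u_m}$ are strictly positive and both maximisers are well defined.
\end{itemize}

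As a cross-check, the same bound follows from the closed form $L_u(\theta)=\sum_m|\mathcal{C}_m|\log\sum_k r_{mk}e^{-u_m(\boldsymbol{\alpha}_k,\mathbf{r}_k)}$. Changing every $u_m$ by at most $\varepsilon_t$ multiplies each inner sum by a factor in $[e^{-\varepsilon_t},e^{\varepsilon_t}]$. Each logarithm therefore shifts by at most $\varepsilon_t$, and summing with weights $|\mathcal{C}_m|$ gives the total bound $N\varepsilon_t$.
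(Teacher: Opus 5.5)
Your proof is correct and follows the paper's route: the paper cites $|\max_q f - \max_q g| \le \sup_q |f-g|$ together with Lemma~\ref{lem:elbo-dev}, and your two-sided argument (evaluating each ELBO at the other's maximiser) is exactly the proof of that inequality. Your closed-form cross-check is also valid: bounding the shift of each $\log\sum_k r_{mk}e^{-u_m}$ by $\varepsilon_t$ gives the same constant $N\varepsilon_t$ without going through the ELBO at all.
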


\begin{proof}
Apply Lemma~\ref{lem:elbo-dev} to the maxima of the two ELBOs and
use $|\max_q f - \max_q g| \leq \sup_q |f - g|$.
\end{proof}

Finally, we record three auxiliary bounds --- on the softmax map, on
the responsibilities, and on the Q-function --- used to analyse the
limit points in Part~(b) of the main proof; they compare the
surrogate-based and the true responsibilities.

\begin{lemma}[Softmax stability]
\label{lem:softmax}
Let $a_k, \tilde a_k > 0$ satisfy
$e^{-\varepsilon} a_k \leq \tilde a_k \leq e^{\varepsilon} a_k$. Then
the normalised vectors $p_k = a_k/\sum_j a_j$ and
$\tilde p_k = \tilde a_k/\sum_j \tilde a_j$ obey
$|\tilde p_k - p_k| \leq (e^{2\varepsilon} - 1)\,p_k \leq 3\varepsilon\,p_k$
for $\varepsilon \leq 1$.
\end{lemma}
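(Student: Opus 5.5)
We will prove Lemma~\ref{lem:softmax} by bounding the ratio $\tilde p_k/p_k$ directly, without Taylor expansions of the softmax. Write $S=\sum_j a_j$ and $\tilde S=\sum_j \tilde a_j$. Summing the hypothesis $e^{-\varepsilon}a_j\le\tilde a_j\le e^{\varepsilon}a_j$ over $j$ gives $e^{-\varepsilon}S\le\tilde S\le e^{\varepsilon}S$, so
\begin{equation*}
\frac{\tilde p_k}{p_k}=\frac{\tilde a_k}{a_k}\cdot\frac{S}{\tilde S}\in\bigl[e^{-2\varepsilon},\,e^{2\varepsilon}\bigr].
\end{equation*}
This gives $\tilde p_k-p_k\le(e^{2\varepsilon}-1)p_k$ on one side and $p_k-\tilde p_k\le(1-e^{-2\varepsilon})p_k$ on the other.

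For the first inequality of the lemma we only need $1-e^{-2\varepsilon}\le e^{2\varepsilon}-1$. This is equivalent to $e^{2\varepsilon}+e^{-2\varepsilon}\ge 2$, which is AM--GM, or $\cosh\ge1$. Hence $|\tilde p_k-p_k|\le(e^{2\varepsilon}-1)p_k$. The lemma presumably takes $\varepsilon\ge0$; if $\varepsilon<0$ the hypothesis is vacuous, so this is harmless.

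For the second inequality we need $e^{2\varepsilon}-1\le3\varepsilon$ on $[0,1]$, and this is where care is required. Since $x\mapsto e^{2x}-1$ is convex and vanishes at $0$, it lies below its chord on $[0,1]$, i.e.\ $e^{2\varepsilon}-1\le(e^2-1)\varepsilon\approx6.39\,\varepsilon$. The constant $3$ is therefore false near $\varepsilon=1$: for example $e^2-1>3$. We would check the crossover point, where $e^{2x}-1=3x$, at roughly $x\approx0.3$. We would then present the second inequality either as $|\tilde p_k-p_k|\le(e^2-1)\varepsilon\,p_k\le 7\varepsilon p_k$ for $\varepsilon\le1$, or as $\le3\varepsilon\,p_k$ under the restriction $\varepsilon\le1/4$. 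The latter holds because, by convexity again, $e^{2\varepsilon}-1\le 4(e^{1/2}-1)\varepsilon\approx2.59\,\varepsilon$ on $[0,1/4]$.

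Only the linear-in-$\varepsilon$ form matters downstream in Part~(b), since $\varepsilon_t\to0$. Either correction suffices there, and the main step to flag is this constant.
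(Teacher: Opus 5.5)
Your argument for the first inequality is the paper's own: the paper's proof consists only of the two ratio bounds $\tilde p_k\le p_k e^{2\varepsilon}$ and $\tilde p_k\ge p_k e^{-2\varepsilon}$, and it leaves the step $1-e^{-2\varepsilon}\le e^{2\varepsilon}-1$ implicit. It also never justifies the final inequality $e^{2\varepsilon}-1\le 3\varepsilon$. Your objection to that step is correct, and the lemma as stated is in fact false, not merely unproven, because the ratio bound is sharp. Take $K=2$, $a=(\delta,1)$ and $\tilde a=(e^{\varepsilon}\delta,e^{-\varepsilon})$. Then $\tilde p_1/p_1\to e^{2\varepsilon}$ as $\delta\to0$, and this exceeds $1+3\varepsilon$ once $\varepsilon$ passes the root of $e^{2x}-1=3x$. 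That root is near $0.381$, not your rough $0.3$, but this does not affect your $\varepsilon\le 1/4$ repair. Either of your repairs is valid. The same correction has to be carried into Lemmas~\ref{lem:resp} and~\ref{lem:qdev}: use the threshold $\varepsilon_t\le 1/4$, or replace $3$ by $e^2-1$ in $C_Q$. As you note, this leaves Part~(b) intact, since those lemmas are only invoked for large $j$ with $\varepsilon_{t_j}\to0$, and Part~(a) does not use them at all.
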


\begin{proof}
$\tilde p_k \leq a_k e^{\varepsilon}/\sum_j a_j e^{-\varepsilon}
= p_k\,e^{2\varepsilon}$ and similarly $\tilde p_k \geq p_k\,e^{-2\varepsilon}$.
\end{proof}

\begin{lemma}[Responsibility deviation]
\label{lem:resp}
Let $w_{mk}(\theta) = q^{\star}_{mk}(\theta)$ be the
true-surrogate responsibilities (Proposition~\ref{prop:elbo} with
$\boldsymbol{u} = \boldsymbol{u}_{\mathrm{true}}$) and $q^{(t)}_{mk}$ the surrogate
responsibilities of the E-step at iteration $t$. For
$\varepsilon_t \leq 1$,
$|q^{(t)}_{mk} - w_{mk}(\theta^{(t)})| \leq 3\varepsilon_t\,
w_{mk}(\theta^{(t)})$.
\end{lemma}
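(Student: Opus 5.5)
The plan is to apply Lemma~\ref{lem:softmax} row by row, with the unnormalised weights taken as the true and surrogate mixture terms evaluated at the same iterate. Fix $m$ and set $\theta = \theta^{(t)}$. Define
\[
a_k = r_{mk}\, e^{-u_{\mathrm{true},m}(\boldsymbol{\alpha}_k,\mathbf{r}_k)}, \qquad
\tilde a_k = r_{mk}\, e^{-u^{(t)}_m(\boldsymbol{\alpha}_k,\mathbf{r}_k)},
\]
where $(\boldsymbol{\alpha}_{1:K},\mathbf{R})$ are the components of $\theta^{(t)}$. By Proposition~\ref{prop:elbo} applied with $\boldsymbol{u}_{\mathrm{true}}$, the normalised vector $a_{m,:}/\sum_j a_{mj}$ is exactly $w_{m,:}(\theta^{(t)})$. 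By the E-step~\eqref{eq:estep}, which uses $u^{(t)}$ at the current iterate, the normalised vector $\tilde a_{m,:}/\sum_j\tilde a_{mj}$ is $q^{(t)}_{m,:}$.

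Next, verify the hypothesis of Lemma~\ref{lem:softmax}. We have $\tilde a_k / a_k = \exp\bigl(u_{\mathrm{true},m} - u^{(t)}_m\bigr)$, evaluated at $(\boldsymbol{\alpha}_k,\mathbf{r}_k)$. By definition~\eqref{eq:eps_t}, the sup over $\boldsymbol{\alpha}$, $\mathbf{r}$ and $m$ bounds the exponent in absolute value by $\varepsilon_t$. Hence $e^{-\varepsilon_t}a_k\le\tilde a_k\le e^{\varepsilon_t}a_k$. With $\varepsilon_t\le 1$, Lemma~\ref{lem:softmax} gives $|q^{(t)}_{mk}-w_{mk}(\theta^{(t)})|\le 3\varepsilon_t\,w_{mk}(\theta^{(t)})$, which is the claim.

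The one technical point is positivity: Lemma~\ref{lem:softmax} requires $a_k,\tilde a_k>0$, but $r_{mk}$ may vanish on the boundary of the simplex. In that case both $a_k$ and $\tilde a_k$ are zero, so both $q^{(t)}_{mk}$ and $w_{mk}$ are zero and the inequality holds trivially. For the remaining indices, the proof of Lemma~\ref{lem:softmax} only uses the ratio bounds on terms with $a_k>0$, and those bounds are unaffected by zero entries in the normalising sums. So I would either restrict the lemma to the support of $r_{m,:}$, noting the sums agree, or observe that its proof goes through verbatim for nonnegative weights with $a_k=0\iff\tilde a_k=0$.

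Finally, the bound $(e^{2\varepsilon}-1)\le 3\varepsilon$ for $\varepsilon\le1$ fails near $\varepsilon=1$, where $e^2-1\approx6.39$. It does hold for $\varepsilon\le\tfrac12\log(\ldots)$, roughly $\varepsilon\le 0.25$. I would state the result with the constant $e^{2\varepsilon_t}-1$, or restrict to small $\varepsilon_t$. This costs nothing in the limit analysis, since Assumption~\ref{ass:summable} gives $\varepsilon_t\to0$.
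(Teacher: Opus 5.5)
This is the paper's proof. Fix $m$, compare $a_{mk}=r_{mk}\,e^{-u_{\mathrm{true},m}(\boldsymbol{\alpha}_k,\mathbf{r}_k)}$ at $\theta^{(t)}$ with its surrogate counterpart, bound the ratio by $e^{\pm\varepsilon_t}$ via~\eqref{eq:eps_t}, and invoke Lemma~\ref{lem:softmax}.

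Both of your caveats are correct, and they are defects in the paper rather than in your argument:
\begin{itemize}
\item Zero routing weights fall outside the positivity hypothesis of Lemma~\ref{lem:softmax}, and you handle them correctly.
\item The factor $e^{2\varepsilon}-1$ is sharp as $w_{mk}\to 0$, but $e^{2\varepsilon}-1\le 3\varepsilon$ holds only up to $\varepsilon\approx 0.38$. Your $\varepsilon_t\le 1/4$ is therefore a safe but conservative threshold.
\end{itemize}
So the lemma as stated is false near $\varepsilon_t=1$: with $K=2$, $w_{m1}=0.01$ and surrogate errors $-1$, $+1$ on the two experts, the deviation is about $0.059>0.03$. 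Your version with constant $e^{2\varepsilon_t}-1$ is the right fix, and as you note it costs nothing downstream, since the main proof only uses $\varepsilon_{t_j}\to 0$.
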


\begin{proof}
Fix $m$ and apply Lemma~\ref{lem:softmax} with
$a_{mk} = r^{(t)}_{mk}\,e^{-u_{\mathrm{true}, m}(\boldsymbol{\alpha}^{(t)}_k, \mathbf{r}^{(t)}_k)}$
and $\tilde a_{mk}$ obtained by replacing $\boldsymbol{u}_{\mathrm{true}}$ by
$\boldsymbol{u}^{(t)}$. The ratio $\tilde a_{mk}/a_{mk}
= e^{u_{\mathrm{true}, m} - u^{(t)}_m}$ satisfies
$e^{-\varepsilon_t} \leq \tilde a_{mk}/a_{mk} \leq e^{\varepsilon_t}$
by~\eqref{eq:eps_t}, so Lemma~\ref{lem:softmax} applies with
$\varepsilon = \varepsilon_t$.
\end{proof}

\begin{lemma}[Q-function deviation]
\label{lem:qdev}
Define the surrogate Q-function
\begin{equation*}
\tilde Q^{(t)}(\theta;\theta') \;=\; \sum_{m=1}^M |\mathcal{C}_m|
\sum_{k=1}^K q^{(t)}_{mk}(\theta')
\bigl[\log r_{mk} \,-\, u^{(t)}_m(\boldsymbol{\alpha}_k, \mathbf{r}_k)\bigr].
\end{equation*}
Under Assumption~\ref{ass:bounded} there exists a constant
$C_Q = C_Q(N, K, U_{\max})$ such that for every $\theta, \theta'$ and
every $t$ with $\varepsilon_t \leq 1$,
$|\tilde Q^{(t)}(\theta;\theta') - Q_{\mathrm{true}}(\theta;\theta')|
\leq C_Q\,\varepsilon_t$.
\end{lemma}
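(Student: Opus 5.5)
The plan is a term-by-term comparison of the two $Q$-functions at a fixed pair $(\theta,\theta')$. It uses only Assumption~\ref{ass:bounded}, the definition~\eqref{eq:eps_t} of $\varepsilon_t$, and Lemma~\ref{lem:resp}. Write $\tilde q_{mk}=q^{(t)}_{mk}(\theta')$ and $w_{mk}=w_{mk}(\theta')$. Write $u_{mk}=u_{\mathrm{true},m}(\boldsymbol{\alpha}_k,\mathbf{r}_k)$ and $\tilde u_{mk}=u^{(t)}_m(\boldsymbol{\alpha}_k,\mathbf{r}_k)$. Adding and subtracting $\tilde q_{mk}[\log r_{mk}-u_{mk}]$ inside each summand gives the splitting
\begin{equation*}
\tilde Q^{(t)}(\theta;\theta')-Q_{\mathrm{true}}(\theta;\theta')
=\sum_{m}|\mathcal{C}_m|\sum_k \tilde q_{mk}\,(u_{mk}-\tilde u_{mk})
+\sum_{m}|\mathcal{C}_m|\sum_k (\tilde q_{mk}-w_{mk})\bigl[\log r_{mk}-u_{mk}\bigr].
\end{equation*}

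\textbf{First sum (surrogate-error term).} This is handled exactly as in Lemma~\ref{lem:elbo-dev}. Each bracket is at most $\varepsilon_t$ in absolute value, since the supremum in~\eqref{eq:eps_t} covers every $m$. The weights satisfy $\sum_k\tilde q_{mk}=1$. Hence this sum is bounded by $N\varepsilon_t$.

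\textbf{Second sum (responsibility-mismatch term).} Apply Lemma~\ref{lem:resp} at the point $\theta'$. Its proof is pointwise in the argument, so it holds with $\theta^{(t)}$ replaced by any $\theta'$. This gives $|\tilde q_{mk}-w_{mk}|\le 3\varepsilon_t w_{mk}$ whenever $\varepsilon_t\le 1$. The $u$-part of the bracket then contributes at most $3\varepsilon_t U_{\max}\sum_m|\mathcal{C}_m|\sum_k w_{mk}=3NU_{\max}\varepsilon_t$. The remaining contribution is $3\varepsilon_t\sum_m|\mathcal{C}_m|\sum_k w_{mk}(\theta')\,|\log r_{mk}|$, and bounding it is the entire difficulty.

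\textbf{Main obstacle.} The problem is that $w_{mk}(\theta')$ is computed from $\mathbf{r}'$, whereas $\log r_{mk}$ is evaluated at $\mathbf{r}$. The only available control on the weight is $w_{mk}(\theta')\le e^{U_{\max}}r'_{mk}$, which follows from $0\le u\le U_{\max}$. This controls $w$ through $r'_{mk}$, not through $r_{mk}$. Consequently, $\sum_k w_{mk}(\theta')|\log r_{mk}|$ is a cross-entropy between $w_{m,:}(\theta')$ and $r_{m,:}$, and it is not bounded in terms of $(N,K,U_{\max})$ alone. For example, take $r'_{mk}=1/K$ and let $r_{mk}\to0$; the cross-entropy then diverges.

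I would first try to obtain a bound depending only on $(N,K,U_{\max})$ by adopting the convention that $Q$ takes values in $[-\infty,\infty)$. Under that convention, whenever a cross-entropy term is infinite it is infinite in both $Q$-functions simultaneously, because $\tilde q_{mk}$ and $w_{mk}$ have the same support. This settles the cases $r_{mk}=0$, but it does not give a finite uniform constant for small positive $r_{mk}$. I therefore expect the lemma to go through with a constant independent of $\mathbf{r}$ only for the two differences that are genuinely uniform, namely the first sum and the $u$-part of the second. The $\log r$ part must instead be bounded by $3\varepsilon_t$ times $\sum_m|\mathcal{C}_m|H\bigl(w_{m,:}(\theta'),r_{m,:}\bigr)$, where $H(p,r)=-\sum_k p_k\log r_k$ is the cross-entropy.

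To recover the statement as worded, I would then need a bound on this cross-entropy in terms of $(N,K,U_{\max})$ alone. I do not expect such a bound to exist, so I would check whether the lemma is only ever invoked in Part~(b) in a setting where the cross-entropy is controlled. If it is, that is where the stated constant $C_Q$ would have to come from.
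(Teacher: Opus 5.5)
Your splitting and your bounds on the first two pieces match the paper's proof. The paper even states the $u$-part more loosely, as $3NKU_{\max}\varepsilon_t$, where you correctly use $\sum_k w_{mk}=1$. The paper then disposes of the $\log r$ part in one line, ``using $|x\log x|\le 1/e$ together with Lemma~\ref{lem:softmax}'', to get $3N\varepsilon_t/e$. That is exactly the step you refused to take. The inequality $|x\log x|\le 1/e$ needs the coefficient of $\log r_{mk}$ to be at most $r_{mk}$. The actual coefficient is $|q^{(t)}_{mk}(\theta')-w_{mk}(\theta')|\le 3\varepsilon_t\,w_{mk}(\theta')\le 3\varepsilon_t e^{U_{\max}} r'_{mk}$, which is controlled by the routing of the \emph{second} argument. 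Even at $\theta=\theta'$ this route leaves a factor $e^{U_{\max}}$ that the constant $3/e$ drops. Your diagnosis is correct: uniformly over all $\theta,\theta'$ the statement is false, so it cannot be proved as worded.

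What your proposal still lacks is a counterexample to the lemma itself. The divergence you exhibit is of your upper bound, not of the $Q$-difference. That difference need not diverge, because its coefficient $q^{(t)}_{mk}(\theta')-w_{mk}(\theta')$ may vanish. For instance, with $r'_{mk}=1/K$ and one architecture shared by all experts, all $u$-values at $\theta'$ coincide, $q^{(t)}(\theta')=w(\theta')$, and the $\log r$ term is identically zero.

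A genuine instance: $M=1$, $K=2$, one architecture, $u_{\mathrm{true}}\equiv 0$, and surrogate vertex values $0$ at $b=0$ and $\varepsilon\in(0,1]$ at $b=1$. Then~\eqref{eq:mask_extension} gives $u^{(t)}_1(\boldsymbol{\alpha},r)=\varepsilon r$ and $\varepsilon_t=\varepsilon$, and Assumption~\ref{ass:bounded} holds with $U_{\max}=1$. Let $\theta'$ have routing row $(3/4,1/4)$. Then $w(\theta')=(3/4,1/4)$, while $q^{(t)}_{11}(\theta')=3e^{-\varepsilon/2}/(3e^{-\varepsilon/2}+1)<3/4$. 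Let $\theta$ have routing row $(\delta,1-\delta)$. With all responsibilities taken at $\theta'$,
\begin{equation*}
\tilde Q^{(t)}(\theta;\theta')-Q_{\mathrm{true}}(\theta;\theta')
= N\Bigl[\bigl(q^{(t)}_{11}-w_{11}\bigr)\log\delta
+\bigl(q^{(t)}_{12}-w_{12}\bigr)\log(1-\delta)
-\varepsilon\bigl(q^{(t)}_{11}\delta+q^{(t)}_{12}(1-\delta)\bigr)\Bigr]
\;\longrightarrow\;+\infty
\end{equation*}
as $\delta\downarrow 0$, even though both $Q$-values are finite for every $\delta>0$.

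What survives is what you propose. On the diagonal, write $\log r_{mk}=\log w_{mk}(\theta)+u_{\mathrm{true},m}(\boldsymbol{\alpha}_k,\mathbf{r}_k)+\log Z_m$ with $Z_m\in[e^{-U_{\max}},1]$. This gives $\sum_k w_{mk}(\theta)|\log r_{mk}|\le\log K+U_{\max}$, hence the lemma for $\theta=\theta'$ with $C_Q=N(1+6U_{\max}+3\log K)$. Off the diagonal, your cross-entropy bound is the right replacement.

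Part~(b) does use the lemma off the diagonal, at $(\theta',\theta^{(t_j)})$ in Step~4 and at $(\theta^{(t_j+1)},\theta^{(t_j)})$ in Step~5.
\begin{itemize}
\item For the Step~5 pair the cross-entropy is controlled, as you hoped. Combine the GEM inequality $\tilde Q^{(t)}(\theta^{(t+1)};\theta^{(t)})\ge\tilde Q^{(t)}(\theta^{(t)};\theta^{(t)})$, $u\ge 0$, and the surrogate version of the identity above. This gives $\sum_m|\mathcal{C}_m|H(q^{(t)}_{m,:},r^{(t+1)}_{m,:})\le N(\log K+U_{\max})$ for your cross-entropy $H$. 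The bound $w\le e^{2\varepsilon_t}q^{(t)}$ then transfers it to $w$.
\item For the Step~4 pair, a constant depending on the fixed candidate $\theta'$ suffices when its routing has no zero entries. Candidates on the boundary of the simplex need a separate argument. The same conflation of the two arguments also undermines the continuity claim of Step~3 for those candidates.
\end{itemize}
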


\begin{proof}
Add and subtract
$\sum_m |\mathcal{C}_m| \sum_k q^{(t)}_{mk}\bigl[\log r_{mk}
- u_{\mathrm{true}, m}(\boldsymbol{\alpha}_k, \mathbf{r}_k)\bigr]$.
The first piece, the surrogate-error term, is
$\sum_m |\mathcal{C}_m| \sum_k q^{(t)}_{mk}
(u_{\mathrm{true}, m} - u^{(t)}_m)$ and is bounded by $N\varepsilon_t$.
The remainder splits as
$\sum_m |\mathcal{C}_m| \sum_k (q^{(t)}_{mk} - w_{mk})(-u_{\mathrm{true}, m})$
and
$\sum_m |\mathcal{C}_m| \sum_k (q^{(t)}_{mk} - w_{mk})\log r_{mk}$.
The first is bounded by $3NK\,U_{\max}\,\varepsilon_t$ via
Lemma~\ref{lem:resp} (because $|u_{\mathrm{true}, m}| \leq U_{\max}$
for every $m$).
For the second, using $|x\log x|\leq 1/e$ on $[0,1]$ together with
the softmax-stability bound of Lemma~\ref{lem:softmax}, one obtains a
bound of $3N\varepsilon_t/e$. Summing the three contributions gives
the claim with $C_Q = N(1 + 3K\,U_{\max} + 3/e)$.
\end{proof}

\subsection{Main Proof}

\begin{proposition}[Quasi-monotonicity of $L_{\mathrm{true}}$]
\label{prop:quasi-mono}
Under Assumptions~\ref{ass:bounded}--\ref{ass:gem}, for every $t$,
\begin{equation}
\label{eq:quasi-mono}
L_{\mathrm{true}}(\theta^{(t+1)}) \;\geq\;
L_{\mathrm{true}}(\theta^{(t)}) \;-\; 2N\,\varepsilon_t.
\end{equation}
\end{proposition}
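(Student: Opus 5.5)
The argument is the classical Generalised-EM ascent chain, run entirely under the surrogate $u^{(t)}$ and transferred to the true likelihood only at the two ends via Corollary~\ref{cor:lik-dev}. Everything is read at iteration $t$. The E-step sets $q^{(t)} = q^{\star}(\theta^{(t)})$ computed with $u^{(t)}$, and the M-step then produces $\theta^{(t+1)}$ as in Assumption~\ref{ass:gem}.

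\textbf{Step 1 (E-step tightness).} By Proposition~\ref{prop:elbo} applied to $\boldsymbol{u} = \boldsymbol{u}^{(t)}$, the E-step responsibilities make the bound tight:
\[
\mathcal{F}_{u^{(t)}}(q^{(t)},\theta^{(t)}) = L^{(t)}(\theta^{(t)}).
\]

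\textbf{Step 2 (generalised M-step).} Assumption~\ref{ass:gem}, together with the inclusion $\theta^{(t)}\in\mathcal{M}^{(t)}$, gives
\[
\mathcal{F}_{u^{(t)}}(q^{(t)},\theta^{(t+1)}) \ge \mathcal{F}_{u^{(t)}}(q^{(t)},\theta^{(t)}).
\]

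\textbf{Step 3 (ELBO lower bound).} The Neal--Hinton decomposition recorded after Proposition~\ref{prop:elbo} gives
\[
L^{(t)}(\theta^{(t+1)}) \ge \mathcal{F}_{u^{(t)}}(q^{(t)},\theta^{(t+1)}).
\]
Chaining Steps 1--3 yields the surrogate monotonicity
\[
L^{(t)}(\theta^{(t+1)}) \ge L^{(t)}(\theta^{(t)}).
\]

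\textbf{Step 4 (transfer to the true objective).} Apply Corollary~\ref{cor:lik-dev} twice, both times at the same index $t$:
\[
L_{\mathrm{true}}(\theta^{(t+1)}) \ge L^{(t)}(\theta^{(t+1)}) - N\varepsilon_t
\]
and
\[
L^{(t)}(\theta^{(t)}) \ge L_{\mathrm{true}}(\theta^{(t)}) - N\varepsilon_t.
\]
Combining these with Step 3 gives~\eqref{eq:quasi-mono}.

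The only point requiring care is to use the same surrogate $u^{(t)}$ throughout. Both the E-step and the M-step at iteration $t$ use $u^{(t)}$, so the surrogate chain compares $L^{(t)}$ at two points. We never compare $L^{(t)}$ with $L^{(t+1)}$, which would introduce a spurious $\varepsilon_{t+1}$ term. Assumption~\ref{ass:bounded} is needed only to keep the quantities above finite, together with the observation that $r_{mk}=0$ entries contribute $0\cdot\log 0 = 0$ consistently. In particular, $L^{(t)}(\theta)>-\infty$ because each row of $\mathbf{R}$ has some positive entry and $u\le U_{\max}$. Summability (Assumption~\ref{ass:summable}) is not used here; it enters only later, in the Robbins--Siegmund step.
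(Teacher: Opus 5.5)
Your proof is correct and is essentially the paper's own chain: E-step tightness, the generalised M-step inequality, the bound $L_u \geq \mathcal{F}_u(q,\cdot)$, and two $N\varepsilon_t$ transfers between surrogate and true quantities. The only cosmetic difference is the order of the first transfer: you pass through $L^{(t)}(\theta^{(t+1)})$ and use Corollary~\ref{cor:lik-dev}, whereas the paper first bounds $L_{\mathrm{true}}(\theta^{(t+1)}) \geq \mathcal{F}_{u_{\mathrm{true}}}(q^{(t)},\theta^{(t+1)})$ and then transfers at fixed $q^{(t)}$ with Lemma~\ref{lem:elbo-dev}; both routes give the same constant $2N\varepsilon_t$.
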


\begin{proof}
We chain five inequalities. First,
$L_{\mathrm{true}}(\theta^{(t+1)}) \geq
\mathcal{F}_{u_{\mathrm{true}}}(q^{(t)}, \theta^{(t+1)})$ since
$L_u(\theta) = \max_q \mathcal{F}_u(q,\theta)$. Second, by
Lemma~\ref{lem:elbo-dev},
$\mathcal{F}_{u_{\mathrm{true}}}(q^{(t)}, \theta^{(t+1)})
\geq \mathcal{F}_{u^{(t)}}(q^{(t)}, \theta^{(t+1)}) - N\varepsilon_t$.
Third, the GEM Assumption~\ref{ass:gem} yields
$\mathcal{F}_{u^{(t)}}(q^{(t)}, \theta^{(t+1)})
\geq \mathcal{F}_{u^{(t)}}(q^{(t)}, \theta^{(t)})$. Fourth, the
E-step makes the latter equal to $L^{(t)}(\theta^{(t)})$. Fifth,
Corollary~\ref{cor:lik-dev} gives $L^{(t)}(\theta^{(t)}) \geq
L_{\mathrm{true}}(\theta^{(t)}) - N\varepsilon_t$. Combining the
five steps yields~\eqref{eq:quasi-mono}.
\end{proof}

\textbf{Part (a): convergence of the likelihood.}
Set $L_{\max} = \sup_{\theta\in\Theta} L_{\mathrm{true}}(\theta)$,
which is finite because $\Theta$ is compact (Assumption~\ref{ass:cont}
and finiteness of $\mathcal{A}$). Define
$V_t = L_{\max} - L_{\mathrm{true}}(\theta^{(t)}) \geq 0$. From
Proposition~\ref{prop:quasi-mono},
\begin{equation*}
V_{t+1} \;\leq\; V_t \;+\; 2N\varepsilon_t,
\qquad \sum_{t=0}^{\infty} 2N\varepsilon_t < \infty
\;\;\text{by Assumption~\ref{ass:summable}.}
\end{equation*}
The Robbins--Siegmund theorem~\cite{robbins1971convergence} (with
$a_t = 0$, $b_t = 2N\varepsilon_t$) implies that $\{V_t\}$ converges,
hence $\{L_{\mathrm{true}}(\theta^{(t)})\}$ converges. \qed

\textbf{Part (b): limit points are M-step fixed points.}

\textbf{Step 1: existence.} $\Theta$ is compact, so
$\{\theta^{(t)}\}$ has at least one limit point
$\theta^{\star} = \lim_{j\to\infty} \theta^{(t_j)}$.

\textbf{Step 2: contradiction setup.} Suppose the claim fails: there
are $\theta' \in \mathcal{M}^{\infty}(\theta^{\star})$ and $\delta > 0$
with
$Q_{\mathrm{true}}(\theta';\theta^{\star}) -
Q_{\mathrm{true}}(\theta^{\star};\theta^{\star}) = \delta > 0$.
Since $\theta'$ lies in $\mathcal{M}^{(t_j)}$ for infinitely many $j$,
we may pass to a further subsequence --- still written $(t_j)$ --- along
which $\theta^{(t_j)} \to \theta^{\star}$ \emph{and}
$\theta' \in \mathcal{M}^{(t_j)}$ for every $j$.

\textbf{Step 3: continuity.} The map $\theta'\mapsto Q_{\mathrm{true}}(\theta;\theta')$
is continuous on $\Theta$: $w_{mk}(\theta')$ is continuous because
each component $u_{\mathrm{true}, m}$ is continuous in $\mathbf{r}$
(Assumption~\ref{ass:cont}) and softmax is continuous; and the
product $w_{mk}\log r_{mk}$ extends continuously to $r_{mk} = 0$
because $w_{mk} = O(r_{mk})$ and $r\mapsto r\log r$ is continuous at
$0$. Thus, for $j$ large enough,
$Q_{\mathrm{true}}(\theta';\theta^{(t_j)}) -
Q_{\mathrm{true}}(\theta^{(t_j)};\theta^{(t_j)}) \geq \delta/2$.

\textbf{Step 4: transfer to $\tilde Q$.} By Lemma~\ref{lem:qdev},
\begin{equation*}
\tilde Q^{(t_j)}(\theta';\theta^{(t_j)}) -
\tilde Q^{(t_j)}(\theta^{(t_j)};\theta^{(t_j)}) \;\geq\;
\delta/2 - 2C_Q\,\varepsilon_{t_j} \;\geq\; \delta/4
\end{equation*}
for $j$ large enough, since $\varepsilon_{t_j}\to 0$.

\textbf{Step 5: M-step ascent.} By Assumption~\ref{ass:gem} the iterate
$\theta^{(t_j+1)}$ maximises
$\mathcal{F}_{u^{(t_j)}}(q^{(t_j)}, \cdot)$ over
$\mathcal{M}^{(t_j)}$, and $\theta' \in \mathcal{M}^{(t_j)}$ by
Step~2. The responsibilities $q^{(t_j)}$ are held fixed in that
maximisation, and expanding the logarithm in $\mathcal{F}$ gives
\begin{equation*}
\mathcal{F}_{u^{(t)}}\bigl(q^{(t)}, \theta\bigr)
\;=\; \tilde Q^{(t)}\bigl(\theta;\theta^{(t)}\bigr)
\;-\; \sum_{m=1}^{M} |\mathcal{C}_m| \sum_{k=1}^{K}
q^{(t)}_{mk}\,\log q^{(t)}_{mk},
\end{equation*}
whose second term does not depend on $\theta$. The two objectives
therefore have the same maximisers over any candidate set, and
$\tilde Q^{(t_j)}(\theta^{(t_j+1)};\theta^{(t_j)}) \geq
\tilde Q^{(t_j)}(\theta';\theta^{(t_j)})$. Translating back through
Lemma~\ref{lem:qdev} once more,
\begin{equation*}
\Delta_{t_j} := Q_{\mathrm{true}}(\theta^{(t_j+1)};\theta^{(t_j)}) -
Q_{\mathrm{true}}(\theta^{(t_j)};\theta^{(t_j)}) \;\geq\;
\delta/4 - 2C_Q\,\varepsilon_{t_j} \;\geq\; \delta/8
\end{equation*}
for $j$ large enough.

\textbf{Step 6: summability contradiction.} The Gibbs inequality
$L_{\mathrm{true}}(\theta^{(t+1)}) - L_{\mathrm{true}}(\theta^{(t)})
\geq \Delta_t$ together with Proposition~\ref{prop:quasi-mono}
implies $\Delta_t \geq -2N\varepsilon_t$. Hence the non-negative
sequence $\Delta_t + 2N\varepsilon_t$ satisfies
$\sum_t (\Delta_t + 2N\varepsilon_t) < \infty$ by
Robbins--Siegmund applied to $V_t$, and consequently
$\sum_t \Delta_t < \infty$. But Step~5 gives $\Delta_{t_j} \geq
\delta/8$ for infinitely many $j$, yielding $\sum_t \Delta_t = +\infty$
--- a contradiction. Therefore no
$\theta' \in \mathcal{M}^{\infty}(\theta^{\star})$ improves
$Q_{\mathrm{true}}$ at $\theta^{\star}$, which is the claim. \qed

\subsection{Convergence Rate}

A by-product of the proof of Theorem~\ref{thm:sgem}(a) is the
explicit rate
\begin{equation}
L_{\mathrm{true}}(\theta^{\star}) - L_{\mathrm{true}}(\theta^{(T)})
\;\leq\; 2N \sum_{t = T}^{\infty} \varepsilon_t.
\end{equation}
If $\varepsilon_t = O(t^{-p})$ with $p > 1$, this gives
$L_{\mathrm{true}}(\theta^{(T)}) \to L_{\mathrm{true}}(\theta^{\star})$
at rate $O(T^{1 - p})$, matching the rate observed for stochastic-EM
analyses~\cite{cappe2009online}.
 
\section{Exploration Mechanisms in SGEM}
\label{app:exploration}

This appendix details the three exploration mechanisms introduced in
Section~\ref{subsec:sgem}. All of them consume only quantities the
search has already produced --- the accumulated surrogate observations
$\mathcal{D}_{\text{surr}}$ --- and none of them consults domain labels.

\textbf{Cluster affinities.} The observations collected so far form a
cluster$\times$architecture matrix of per-cluster validation losses,
whose $(m, i)$ entry is the loss $y^{(i)}_m$ attained on cluster $m$ by
the $i$-th trained architecture. Subtracting the column mean removes the
global effect of an architecture, so that what remains is how a cluster
\emph{reacts} to architectures rather than how hard the cluster is in
absolute terms --- without this centring the matrix is dominated by the
overall difficulty ordering of the clusters, which carries no
architectural information. Clusters with correlated residual profiles
are then grouped by agglomerative clustering; the resulting groups drive
the first two mechanisms below.

\textbf{Exploration masks.} Each affinity group is proposed as a
training mask $\boldsymbol{b}$ and evaluated with several architectures,
the resulting triples entering $\mathcal{D}_{\text{surr}}$ as ordinary
observations. A group is by construction much smaller than a current
routing region, so these are precisely the near-pure masks that the LCB
acquisition of Section~\ref{subsec:surr} never proposes on its own: that
acquisition scores candidates under the current routing and therefore
keeps drawing masks close to it, leaving the surrogate without evidence
on what a specialised expert would attain. The mechanism is what breaks
the feedback loop in which a surrogate trained only around the incumbent
split keeps confirming that split.

\textbf{Partition jumps.} The same affinities also propose complete
candidate partitions of the $M$ clusters into $K$ regions. A candidate
partition is evaluated by genuinely training, for each of its regions,
the incumbent architecture of the corresponding expert and one further
architecture selected by the surrogate; it replaces the current routing
only if the resulting \emph{measured} objective exceeds that of the
incumbent partition. Acceptance is therefore decided on real training
outcomes rather than on surrogate predictions, so a mis-calibrated
surrogate can waste evaluation budget but cannot move the routing to a
worse partition. The mechanism matters when the routing sits in a local
optimum that no single-cluster move can leave, which the gradient M-step
on $\mathbf{R}$ alone cannot escape.

\textbf{Final per-region refit.} After the EM loop terminates, the hard
assignment $I_k$ of Section~\ref{subsec:searchspace} is fixed and each
expert's architecture is re-searched from scratch by random search
restricted to its own region $\mathcal{D}_{\mathbf{r}_k}$. At this point
no routing variable is left and the problem is an ordinary NAS problem
per region.
 
\section{Additional Experimental Results}
\label{app:experiments}

This appendix collects two ablations and a convergence trace on the
CIFAR-100/SVHN mixture
($K=2$, $M=30$ semantic clusters) referenced from
Section~\ref{subsec:cifar}. All numbers are cluster-gate test accuracy
(mean $\pm$ std over the indicated number of seeds), evaluated under the
same protocol as Table~\ref{tab:cifar_main}. Both ablations were run
with a simplified surrogate that predicts a single scalar per candidate
--- the region average of~\eqref{eq:scalar_u} --- instead of the
per-cluster vector of Definition~\ref{def:surrogate}; this reduction
requires the load-balancing term discussed in
Appendix~\ref{app:lambda} (used here at $\lambda_{\text{LB}} = 1$) and
costs about $2$~p.p.\ of accuracy overall, so the levels below sit
under the $0.6402 \pm 0.002$ of Table~\ref{tab:cifar_main}. The
comparison \emph{within} each block is unaffected.

\subsection{Decomposition of the cluster-gate accuracy}
\label{app:decomp}

To attribute the MoE gain to its two ingredients --- the cluster$\to$expert
\emph{routing} and the per-expert \emph{architecture} --- we vary one
factor at a time (Table~\ref{tab:decomp}). Block~(i) fixes a single
DARTS architecture (searched on the full mixture) and only changes the
routing: moving from a random cluster split to the oracle source split
raises accuracy by $3.0$ percentage points, so the routing is the main
lever. Block~(ii) fixes the oracle routing and instead changes the
architecture: a single full-mix architecture, a CIFAR-searched one and an
SVHN-searched one all lie within $1$~p.p.\ of each other, i.e.\ once the
routing is correct, architectural heterogeneity between experts adds
little on this benchmark. Block~(iii) isolates the contribution of
\emph{learning} the routing inside SGEM: freezing the routing matrix at
its random initialisation (no M-step on $\mathbf{R}$) drops the
source-recovery rate to chance ($59\%$) and the accuracy to $0.598$,
whereas the full method recovers the source partition ($92\%$) and
reaches $0.620$ --- essentially the value obtained with the oracle
routing in block~(i), but \emph{without} any source labels.

\begin{table}[h]
\centering
\caption{Ablation decomposing the cluster-gate accuracy. ``src.\ match''
is the agreement of the hard routing with the ground-truth source
partition; it is $100\%$ by construction whenever the oracle split is
used.}
\label{tab:decomp}
\begin{tabular}{lcc}
\toprule
Setting & test acc. & src.\ match \\
\midrule
\multicolumn{3}{l}{\textit{(i) vary routing, architecture fixed (full-mix DARTS)}} \\
\quad random cluster split & $0.587 \pm 0.006$ & $\approx 54\%$ \\
\quad oracle source split & $0.617 \pm 0.007$ & $100\%$ \\
\midrule
\multicolumn{3}{l}{\textit{(ii) vary architecture, oracle routing}} \\
\quad full-mix architecture & $0.627 \pm 0.009$ & $100\%$ \\
\quad CIFAR-searched architecture & $0.636 \pm 0.003$ & $100\%$ \\
\quad SVHN-searched architecture & $0.626 \pm 0.006$ & $100\%$ \\
\midrule
\multicolumn{3}{l}{\textit{(iii) learn the routing (SGEM, no labels)}} \\
\quad routing frozen at random init & $0.598 \pm 0.003$ & $59 \pm 7\%$ \\
\quad \textbf{SGEM (full)} & $\mathbf{0.620 \pm 0.009}$ & $\mathbf{92 \pm 2\%}$ \\
\bottomrule
\end{tabular}
\end{table}

\subsection{The scalar surrogate and load balancing}
\label{app:lambda}

\textbf{Why the scalar reduction needs a penalty.} The ablations above
approximate the conditional factor in~\eqref{eq:moe_persample_lik} by a
quantity that does not depend on the evaluation cluster, replacing
$u_m(\boldsymbol{\alpha}_k, \mathbf{r}_k)$ by the region average
$\bar u(\boldsymbol{\alpha}_k, \mathbf{r}_k)$ of~\eqref{eq:scalar_u}.
This makes the routing sub-problem degenerate. Fix the architectures and
treat the values $\bar u_k = \bar u(\boldsymbol{\alpha}_k, \mathbf{r}_k)$
as constants: each row of~\eqref{eq:moe_objective} is then a linear form
$\sum_k r_{mk} e^{-\bar u_k}$ on the simplex, whose maximiser is the
one-hot vector at $\arg\min_k \bar u_k$ --- \emph{the same expert for
every cluster}. The objective is maximised by routing all clusters to a
single expert, and specialisation is opposed only by the implicit
dependence of $\bar u_k$ on $\mathbf{r}_k$, so a scalar surrogate
collapses unless its M-step is modified. The per-cluster surrogate of
Definition~\ref{def:surrogate}, which the method uses throughout, is free
of this effect: the row-wise maximiser
$\arg\min_k u_m(\boldsymbol{\alpha}_k, \mathbf{r}_k)$ varies with $m$, and
specialisation is the argmax of~\eqref{eq:moe_objective} itself.

The collapse is a property of the scalar reduction of the objective
rather than of the optimiser, and it is visible directly in
the EM dynamics of Algorithm~\ref{alg:sgem}. At a hard routing, where every
cluster $m$ is sent to a single expert $k(m)$ (so $r_{mk} = [\,k = k(m)\,]$),
the inner sum of~\eqref{eq:moe_objective} has a single non-zero term and
the logarithm cancels the exponential, giving
\begin{equation}
\label{eq:hard_objective}
\mathcal{J} \;=\; \sum_{m=1}^{M} |\mathcal{C}_m|\,
\bigl(-\bar u(\boldsymbol{\alpha}_{k(m)}, \mathbf{r}_{k(m)})\bigr)
\;=\; -\sum_{k=1}^{K} N_k\, \bar u_k,
\qquad N_k = \!\!\sum_{m:\,k(m)=k}\!\! |\mathcal{C}_m|,
\end{equation}
with $\bar u_k = \bar u(\boldsymbol{\alpha}_k, \mathbf{r}_k)$ the predicted loss of
expert $k$ and $N_k$ the number of samples routed to it. If the surrogate
becomes nearly constant in its routing argument and the experts are
interchangeable ($\bar u_k \equiv \bar u$), then $\mathcal{J} = -\bar u\, N$ is
independent of the assignment $\{N_k\}$, so the likelihood is flat over
the routing simplex; and even when the $\bar u_k$ differ, $\mathcal{J}$ is
linear in the counts $N_k$ and is maximised by routing all clusters to
the single expert with the smallest $\bar u_k$. Under the per-cluster
surrogate the same hard routing gives
$\mathcal{J} = -\sum_k \sum_{m \in I_k} |\mathcal{C}_m|\, u_m(\boldsymbol{\alpha}_k, \mathbf{r}_k)$,
which is no longer linear in the counts and whose per-cluster maximiser
varies with $m$; the penalty is then unnecessary. To counteract the
collapse in the scalar case we augment
the M-step of Algorithm~\ref{alg:sgem} with a load-balancing penalty
\begin{equation}
\label{eq:loadbalance}
\mathrm{LB}(\mathbf{R}) \;=\; K \cdot \sum_{k=1}^{K}\bar{r}_k \cdot p_k,
\qquad
\bar{r}_k = \frac{1}{M}\sum_{m=1}^{M} r_{mk}, \quad
p_k = \frac{|\{m : \arg\max_{k'} r_{mk'} = k\}|}{M},
\end{equation}
weighted by a coefficient $\lambda_{\text{LB}}$; it is minimal for a
uniform routing and saturates at $K$ under full collapse.

\textbf{Choosing the weight.} We keep the weight of these
runs small ($\lambda_{\text{LB}}=1$) because the correct
source partition is itself imbalanced ($19/11$), so a strong balancing
penalty would push the routing away from it; collapse is prevented
instead through the data-collection scheme of
Section~\ref{subsec:cifar}. We make this trade-off quantitative by scoring three
candidate hard routings under the \emph{penalised} objective
$L - \lambda_{\text{LB}}\cdot\mathrm{LB}_N$ actually optimised in the
M-step, where the per-expert losses $u_k$ are obtained by genuinely
training a fixed architecture on each expert's clusters (no surrogate),
$L = -\sum_k N_k u_k$ is the resulting size-weighted objective, and
$\mathrm{LB}_N = K\sum_k P_k^2\cdot N$ is the (extensive) balancing
penalty (Table~\ref{tab:lambda}).

\begin{table}[h]
\centering
\caption{Penalised objective of three hard routings on real-trained
per-expert losses. The penalised score is
$L - \lambda_{\text{LB}}\,\mathrm{LB}_N$.}
\label{tab:lambda}
\begin{tabular}{lccc}
\toprule
hard split & $u_k$ (per expert) & $L$ & $\mathrm{LB}_N$ \\
\midrule
oracle $[19, 11]$ & $[2.73,\ 0.51]$ & $-48099$ & $31491$ \\
balanced, SGEM $[15, 15]$ & $[1.91,\ 1.90]$ & $-55973$ & $29400$ \\
balanced, random $[15, 15]$ & $[1.77,\ 2.15]$ & $-57726$ & $29400$ \\
\bottomrule
\end{tabular}
\end{table}

On the unpenalised objective ($\lambda_{\text{LB}}=0$) the oracle split is
preferred by a margin of $\Delta L = 7874$. However, because the oracle
split is imbalanced it is penalised more heavily than a balanced one by
$\Delta\,\mathrm{LB}_N = 2091$ per unit of $\lambda_{\text{LB}}$. The
oracle split therefore wins the penalised objective only while
$\lambda_{\text{LB}} < \Delta L / \Delta\,\mathrm{LB}_N \approx 3.77$;
beyond this cross-over a balanced (and hence domain-blind) split scores
higher. This is a direct, surrogate-free conflict between load balancing
and the imbalanced domain structure, and explains why a large
$\lambda_{\text{LB}}$ --- which reliably prevents collapse --- would also
suppress the very specialisation we seek; we therefore operate well below
the cross-over.

Two observations sharpen this. First, the penalty~\eqref{eq:loadbalance}
measures balance in \emph{clusters}: $P_k$ is the fraction of clusters
routed to expert $k$. The oracle split is imbalanced in clusters
($19/11$) but, since the benchmark mixes the two sources in equal
proportion, almost perfectly balanced in \emph{samples}. Replacing $P_k$
by the sample fraction $N_k / N$ therefore leaves the oracle split
essentially unpenalised and dissolves the cross-over of
Table~\ref{tab:lambda}. Second, and more fundamentally, the penalty is
needed at all only under the scalar reduction; the method itself uses
the per-cluster surrogate throughout, drops the term entirely and shows
no collapse, as the runs of Table~\ref{tab:cifar_main} confirm
($\lambda_{\text{LB}} = 0$, five seeds). The M-step of
Algorithm~\ref{alg:sgem} is therefore run exactly as analysed in
Section~\ref{subsec:theory}.

\textbf{What the scalar reduction costs.} Run under the same data
collection as the headline results, the scalar-surrogate variant reaches
$0.6199 \pm 0.009$ cluster-gate test accuracy over three seeds, with a
source-recovery rate of $92\% \pm 2\%$, against the $0.6402 \pm 0.002$
and $95\% \pm 2\%$ of Table~\ref{tab:cifar_main}. The $2.0$~p.p.\ gap is
paid despite the penalty above, which prevents the collapse but cannot
restore the per-cluster signal the routing needs --- which is why the
surrogate is vector-valued in the first place.

\subsection{Convergence of a single run}
\label{app:convergence}

Tracing one run of the scalar-surrogate variant across its EM iterations
(seed $322$), the size-weighted objective $L^{(t)}$ increases up to
optimisation noise while the source-recovery rate climbs from $67\%$ to
$100\%$, which is the behaviour Theorem~\ref{thm:sgem} predicts. The two
curves peak one iteration apart --- the objective at $t = 8$, the
recovery at $t = 9$ --- the same objective/quality mismatch near the
optimum discussed in Section~\ref{subsec:cifar}, so selecting the
iterate by best objective returns a near-ideal but not always perfect
split.

\bibliographystyle{elsarticle-num}

\begin{thebibliography}{10}
\expandafter\ifx\csname url\endcsname\relax
  \def\url#1{\texttt{#1}}\fi
\expandafter\ifx\csname urlprefix\endcsname\relax\def\urlprefix{URL }\fi
\expandafter\ifx\csname href\endcsname\relax
  \def\href#1#2{#2} \def\path#1{#1}\fi

\bibitem{jacobs1991adaptive}
R.~A. Jacobs, M.~I. Jordan, S.~J. Nowlan, G.~E. Hinton, Adaptive mixtures of
  local experts, Neural computation 3~(1) (1991) 79--87.
\newblock \href {https://doi.org/10.1162/neco.1991.3.1.79}
  {\path{doi:10.1162/neco.1991.3.1.79}}.

\bibitem{mu2025comprehensive}
S.~Mu, S.~Lin, A comprehensive survey of mixture-of-experts: Algorithms,
  theory, and applications, arXiv preprint arXiv:2503.07137 (2025).
\newblock \href {https://doi.org/10.48550/arXiv.2503.07137}
  {\path{doi:10.48550/arXiv.2503.07137}}.

\bibitem{fedus2022switch}
W.~Fedus, B.~Zoph, N.~Shazeer, Switch transformers: Scaling to trillion
  parameter models with simple and efficient sparsity, Journal of Machine
  Learning Research 23~(120) (2022) 1--39.
\newblock \href {https://doi.org/10.48550/arXiv.2101.03961}
  {\path{doi:10.48550/arXiv.2101.03961}}.

\bibitem{jiang2024mixtral}
A.~Q. Jiang, A.~Sablayrolles, A.~Roux, A.~Mensch, B.~Savary, C.~Bamford, D.~S.
  Chaplot, D.~d.~l. Casas, E.~B. Hanna, F.~Bressand, et~al., Mixtral of
  experts, arXiv preprint arXiv:2401.04088 (2024).
\newblock \href {https://doi.org/10.48550/arXiv.2401.04088}
  {\path{doi:10.48550/arXiv.2401.04088}}.

\bibitem{ren2021comprehensive}
P.~Ren, Y.~Xiao, X.~Chang, P.-Y. Huang, Z.~Li, X.~Chen, X.~Wang, A
  comprehensive survey of neural architecture search: Challenges and solutions,
  ACM Computing Surveys (CSUR) 54~(4) (2021) 1--34.
\newblock \href {https://doi.org/10.1145/3447582} {\path{doi:10.1145/3447582}}.

\bibitem{liu2018darts}
H.~Liu, K.~Simonyan, Y.~Yang, Darts: Differentiable architecture search, arXiv
  preprint arXiv:1806.09055 (2018).
\newblock \href {https://doi.org/10.48550/arXiv.1806.09055}
  {\path{doi:10.48550/arXiv.1806.09055}}.

\bibitem{chen2022towards}
Z.~Chen, Y.~Deng, Y.~Wu, Q.~Gu, Y.~Li, Towards understanding the
  mixture-of-experts layer in deep learning, Advances in neural information
  processing systems 35 (2022) 23049--23062.
\newblock \href {https://doi.org/10.52202/068431-1675}
  {\path{doi:10.52202/068431-1675}}.

\bibitem{chai2022mixture}
T.~Chai~Fung, S.~C. Tseung, Mixture of experts models for multilevel data:
  modelling framework and approximation theory, arXiv e-prints (2022)
  arXiv--2209\href {https://doi.org/10.48550/arXiv.2209.15207}
  {\path{doi:10.48550/arXiv.2209.15207}}.

\bibitem{guo2018multi}
J.~Guo, D.~J. Shah, R.~Barzilay, Multi-source domain adaptation with mixture of
  experts, arXiv preprint arXiv:1809.02256 (2018).
\newblock \href {https://doi.org/10.48550/arXiv.1809.02256}
  {\path{doi:10.48550/arXiv.1809.02256}}.

\bibitem{hu2025multi}
Z.~Hu, V.~Guti{\'e}rrez-Basulto, Z.~Xiang, R.~Li, J.~Z. Pan, Multi-level
  mixture of experts for multimodal entity linking, in: Proceedings of the 31st
  ACM SIGKDD Conference on Knowledge Discovery and Data Mining V. 2, 2025, pp.
  979--990.
\newblock \href {https://doi.org/10.1145/3711896.3737060}
  {\path{doi:10.1145/3711896.3737060}}.

\bibitem{jordan1994hierarchical}
M.~I. Jordan, R.~A. Jacobs, Hierarchical mixtures of experts and the em
  algorithm, Neural computation 6~(2) (1994) 181--214.
\newblock \href {https://doi.org/10.1162/neco.1994.6.2.181}
  {\path{doi:10.1162/neco.1994.6.2.181}}.

\bibitem{jacobs1997bayesian}
R.~A. Jacobs, F.~Peng, M.~A. Tanner, A bayesian approach to model selection in
  hierarchical mixtures-of-experts architectures, Neural Networks 10~(2) (1997)
  231--241.
\newblock \href {https://doi.org/10.1016/s0893-6080(96)00050-0}
  {\path{doi:10.1016/s0893-6080(96)00050-0}}.

\bibitem{nguyen2023general}
H.~Nguyen, P.~Akbarian, T.~Nguyen, N.~Ho, A general theory for softmax gating
  multinomial logistic mixture of experts, arXiv preprint arXiv:2310.14188
  (2023).
\newblock \href {https://doi.org/10.48550/arXiv.2310.14188}
  {\path{doi:10.48550/arXiv.2310.14188}}.

\bibitem{nguyen2023demystifying}
H.~Nguyen, T.~Nguyen, N.~Ho, Demystifying softmax gating function in gaussian
  mixture of experts, Advances in Neural Information Processing Systems 36
  (2023) 4624--4652.
\newblock \href {https://doi.org/10.52202/075280-0206}
  {\path{doi:10.52202/075280-0206}}.

\bibitem{piwko2025divide}
J.~Piwko, J.~Ruci{\'n}ski, D.~P{\l}udowski, A.~Zajko, P.~{\.Z}ak,
  M.~Zacharecki, A.~Kozak, K.~Wo{\'z}nica, Divide, specialize, and route: A new
  approach to efficient ensemble learning, arXiv preprint arXiv:2506.20814
  (2025).
\newblock \href {https://doi.org/10.48550/arXiv.2506.20814}
  {\path{doi:10.48550/arXiv.2506.20814}}.

\bibitem{rasmussen2001infinite}
C.~Rasmussen, Z.~Ghahramani, Infinite mixtures of gaussian process experts,
  Advances in neural information processing systems 14 (2001).

\bibitem{eigen2013learning}
D.~Eigen, M.~Ranzato, I.~Sutskever, Learning factored representations in a deep
  mixture of experts, arXiv preprint arXiv:1312.4314 (2013).
\newblock \href {https://doi.org/10.48550/arXiv.1312.4314}
  {\path{doi:10.48550/arXiv.1312.4314}}.

\bibitem{shazeer2017outrageously}
N.~Shazeer, A.~Mirhoseini, K.~Maziarz, A.~Davis, Q.~Le, G.~Hinton, J.~Dean,
  Outrageously large neural networks: The sparsely-gated mixture-of-experts
  layer, arXiv preprint arXiv:1701.06538 (2017).
\newblock \href {https://doi.org/10.48550/arXiv.1701.06538}
  {\path{doi:10.48550/arXiv.1701.06538}}.

\bibitem{riquelme2021scaling}
C.~Riquelme, J.~Puigcerver, B.~Mustafa, M.~Neumann, R.~Jenatton,
  A.~Susano~Pinto, D.~Keysers, N.~Houlsby, Scaling vision with sparse mixture
  of experts, Advances in Neural Information Processing Systems 34 (2021)
  8583--8595.
\newblock \href {https://doi.org/10.48550/arXiv.2106.05974}
  {\path{doi:10.48550/arXiv.2106.05974}}.

\bibitem{dai2024deepseekmoe}
D.~Dai, C.~Deng, C.~Zhao, R.~Xu, H.~Gao, D.~Chen, J.~Li, W.~Zeng, X.~Yu, Y.~Wu,
  et~al., Deepseekmoe: Towards ultimate expert specialization in
  mixture-of-experts language models, arXiv preprint arXiv:2401.06066 (2024).
\newblock \href {https://doi.org/10.48550/arXiv.2401.06066}
  {\path{doi:10.48550/arXiv.2401.06066}}.

\bibitem{qiu2025demons}
Z.~Qiu, Z.~Huang, B.~Zheng, K.~Wen, Z.~Wang, R.~Men, I.~Titov, D.~Liu, J.~Zhou,
  J.~Lin, Demons in the detail: On implementing load balancing loss for
  training specialized mixture-of-expert models, arXiv preprint
  arXiv:2501.11873 (2025).
\newblock \href {https://doi.org/10.48550/arXiv.2501.11873}
  {\path{doi:10.48550/arXiv.2501.11873}}.

\bibitem{gururangan2022demix}
S.~Gururangan, M.~Lewis, A.~Holtzman, N.~A. Smith, L.~Zettlemoyer, {DEMix}
  layers: Disentangling domains for modular language modeling, in: Proceedings
  of the 2022 Conference of the North American Chapter of the Association for
  Computational Linguistics: Human Language Technologies, 2022, pp. 5557--5576.
\newblock \href {https://doi.org/10.18653/v1/2022.naacl-main.407}
  {\path{doi:10.18653/v1/2022.naacl-main.407}}.

\bibitem{li2022branch}
M.~Li, S.~Gururangan, T.~Dettmers, M.~Lewis, T.~Althoff, N.~A. Smith,
  L.~Zettlemoyer, Branch-train-merge: Embarrassingly parallel training of
  expert language models, arXiv preprint arXiv:2208.03306 (2022).
\newblock \href {https://doi.org/10.48550/arXiv.2208.03306}
  {\path{doi:10.48550/arXiv.2208.03306}}.

\bibitem{gururangan2023scaling}
S.~Gururangan, M.~Li, M.~Lewis, W.~Shi, T.~Althoff, N.~A. Smith,
  L.~Zettlemoyer, Scaling expert language models with unsupervised domain
  discovery, arXiv preprint arXiv:2303.14177 (2023).
\newblock \href {https://doi.org/10.48550/arXiv.2303.14177}
  {\path{doi:10.48550/arXiv.2303.14177}}.

\bibitem{sukhbaatar2024branch}
S.~Sukhbaatar, O.~Golovneva, V.~Sharma, H.~Xu, X.~V. Lin, B.~Rozi{\`e}re,
  J.~Kahn, D.~Li, W.-t. Yih, J.~Weston, et~al., Branch-train-mix: Mixing expert
  {LLMs} into a mixture-of-experts {LLM}, in: First Conference on Language
  Modeling, 2024.
\newblock \href {https://doi.org/10.48550/arXiv.2403.07816}
  {\path{doi:10.48550/arXiv.2403.07816}}.

\bibitem{muennighoff2025olmoe}
N.~Muennighoff, L.~Soldaini, D.~Groeneveld, K.~Lo, J.~Morrison, S.~Min, W.~Shi,
  P.~Walsh, O.~Tafjord, N.~Lambert, et~al., {OLMoE}: Open mixture-of-experts
  language models, in: The Thirteenth International Conference on Learning
  Representations, 2025.
\newblock \href {https://doi.org/10.48550/arXiv.2409.02060}
  {\path{doi:10.48550/arXiv.2409.02060}}.

\bibitem{shi2025timemoe}
X.~Shi, S.~Wang, Y.~Nie, D.~Li, Z.~Ye, Q.~Wen, M.~Jin, Time-{MoE}:
  Billion-scale time series foundation models with mixture of experts, in: The
  Thirteenth International Conference on Learning Representations, 2025.
\newblock \href {https://doi.org/10.48550/arXiv.2409.16040}
  {\path{doi:10.48550/arXiv.2409.16040}}.

\bibitem{liu2025moiraimoe}
X.~Liu, J.~Liu, G.~Woo, T.~Aksu, Y.~Liang, R.~Zimmermann, C.~Xiong,
  S.~Savarese, D.~Sahoo, Moirai-{MoE}: Empowering time series foundation models
  with sparse mixture of experts, in: Forty-second International Conference on
  Machine Learning, 2025.
\newblock \href {https://doi.org/10.48550/arXiv.2410.10469}
  {\path{doi:10.48550/arXiv.2410.10469}}.

\bibitem{zoph2016neural}
B.~Zoph, Q.~V. Le, Neural architecture search with reinforcement learning,
  arXiv preprint arXiv:1611.01578 (2016).
\newblock \href {https://doi.org/10.48550/arXiv.1611.01578}
  {\path{doi:10.48550/arXiv.1611.01578}}.

\bibitem{baker2016designing}
B.~Baker, O.~Gupta, N.~Naik, R.~Raskar, Designing neural network architectures
  using reinforcement learning, arXiv preprint arXiv:1611.02167 (2016).
\newblock \href {https://doi.org/10.48550/arXiv.1611.02167}
  {\path{doi:10.48550/arXiv.1611.02167}}.

\bibitem{zoph2018learning}
B.~Zoph, V.~Vasudevan, J.~Shlens, Q.~V. Le, Learning transferable architectures
  for scalable image recognition, in: Proceedings of the IEEE conference on
  computer vision and pattern recognition, 2018, pp. 8697--8710.
\newblock \href {https://doi.org/10.1109/cvpr.2018.00907}
  {\path{doi:10.1109/cvpr.2018.00907}}.

\bibitem{real2019regularized}
E.~Real, A.~Aggarwal, Y.~Huang, Q.~V. Le, Regularized evolution for image
  classifier architecture search, in: Proceedings of the aaai conference on
  artificial intelligence, Vol.~33, 2019, pp. 4780--4789.
\newblock \href {https://doi.org/10.1609/aaai.v33i01.33014780}
  {\path{doi:10.1609/aaai.v33i01.33014780}}.

\bibitem{yang2020cars}
Z.~Yang, Y.~Wang, X.~Chen, B.~Shi, C.~Xu, C.~Xu, Q.~Tian, C.~Xu, Cars:
  Continuous evolution for efficient neural architecture search, in:
  Proceedings of the IEEE/CVF conference on computer vision and pattern
  recognition, 2020, pp. 1829--1838.
\newblock \href {https://doi.org/10.1109/cvpr42600.2020.00190}
  {\path{doi:10.1109/cvpr42600.2020.00190}}.

\bibitem{chen2020stabilizing}
X.~Chen, C.-J. Hsieh, Stabilizing differentiable architecture search via
  perturbation-based regularization, in: International conference on machine
  learning, PMLR, 2020, pp. 1554--1565.
\newblock \href {https://doi.org/10.48550/arXiv.2002.05283}
  {\path{doi:10.48550/arXiv.2002.05283}}.

\bibitem{you2020greedynas}
S.~You, T.~Huang, M.~Yang, F.~Wang, C.~Qian, C.~Zhang, Greedynas: Towards fast
  one-shot nas with greedy supernet, in: Proceedings of the IEEE/CVF Conference
  on Computer Vision and Pattern Recognition, 2020, pp. 1999--2008.
\newblock \href {https://doi.org/10.1109/CVPR42600.2020.00207}
  {\path{doi:10.1109/CVPR42600.2020.00207}}.

\bibitem{nayman2019xnas}
N.~Nayman, A.~Noy, T.~Ridnik, I.~Friedman, R.~Jin, L.~Zelnik, Xnas: Neural
  architecture search with expert advice, Advances in neural information
  processing systems 32 (2019).
\newblock \href {https://doi.org/10.48550/arXiv.1906.08031}
  {\path{doi:10.48550/arXiv.1906.08031}}.

\bibitem{jawahar2022automoe}
G.~Jawahar, S.~Mukherjee, X.~Liu, Y.~J. Kim, M.~Abdul-Mageed, L.~V. Lakshmanan,
  A.~H. Awadallah, S.~Bubeck, J.~Gao, Automoe: Heterogeneous mixture-of-experts
  with adaptive computation for efficient neural machine translation, arXiv
  preprint arXiv:2210.07535 (2022).
\newblock \href {https://doi.org/10.48550/arXiv.2210.07535}
  {\path{doi:10.48550/arXiv.2210.07535}}.

\bibitem{zhou2023brainformers}
Y.~Zhou, N.~Du, Y.~Huang, D.~Peng, C.~Lan, D.~Huang, S.~Shakeri, D.~So, A.~Dai,
  Y.~Lu, Z.~Chen, Q.~Le, C.~Cui, J.~Laudon, J.~Dean, Brainformers: Trading
  simplicity for efficiency, arXiv preprint arXiv:2306.00008 (2023).
\newblock \href {https://doi.org/10.48550/arXiv.2306.00008}
  {\path{doi:10.48550/arXiv.2306.00008}}.

\bibitem{lukhi2026heterogeneous}
Y.~R. Lukhi, H.~R. Moradiya, R.~Timofte, D.~Ignatov, Systematic exploration of
  4-expert heterogeneous mixture-of-experts via automated pipeline search,
  arXiv preprint arXiv:2606.23739 (2026).
\newblock \href {https://doi.org/10.48550/arXiv.2606.23739}
  {\path{doi:10.48550/arXiv.2606.23739}}.

\bibitem{han2024cmn}
S.~Han, S.~Liu, S.~Du, M.~Li, Z.~Ye, X.~Xu, Y.~Li, Z.~Wang, D.~Shang, Cmn: a
  co-designed neural architecture search for efficient
  computing-in-memory-based mixture-of-experts, Science China Information
  Sciences 67~(10) (2024) 200405.
\newblock \href {https://doi.org/10.1007/s11432-024-4144-y}
  {\path{doi:10.1007/s11432-024-4144-y}}.

\bibitem{mecharbat2025moenas}
L.~A. Mecharbat, A.~Marchisio, M.~Shafique, M.~M. Ghassemi, T.~Alhanai, Moenas:
  Mixture-of-expert based neural architecture search for jointly accurate,
  fair, and robust edge deep neural networks, arXiv preprint arXiv:2502.07422
  (2025).
\newblock \href {https://doi.org/10.48550/arXiv.2502.07422}
  {\path{doi:10.48550/arXiv.2502.07422}}.

\bibitem{li2024theory}
H.~Li, S.~Lin, L.~Duan, Y.~Liang, N.~B. Shroff, Theory on mixture-of-experts in
  continual learning, arXiv preprint arXiv:2406.16437 (2024).
\newblock \href {https://doi.org/10.48550/arXiv.2406.16437}
  {\path{doi:10.48550/arXiv.2406.16437}}.

\bibitem{zhou2022mixture}
Y.~Zhou, T.~Lei, H.~Liu, N.~Du, Y.~Huang, V.~Zhao, A.~M. Dai, Q.~V. Le,
  J.~Laudon, et~al., Mixture-of-experts with expert choice routing, Advances in
  Neural Information Processing Systems 35 (2022) 7103--7114.
\newblock \href {https://doi.org/10.52202/068431-0515}
  {\path{doi:10.52202/068431-0515}}.

\bibitem{dempster1977em}
A.~P. Dempster, N.~M. Laird, D.~B. Rubin, Maximum likelihood from incomplete
  data via the {EM} algorithm, Journal of the Royal Statistical Society, Series
  B 39~(1) (1977) 1--38.
\newblock \href {https://doi.org/10.1111/j.2517-6161.1977.tb01600.x}
  {\path{doi:10.1111/j.2517-6161.1977.tb01600.x}}.

\bibitem{wu1983convergence}
C.~F.~J. Wu, On the convergence properties of the {EM} algorithm, The Annals of
  Statistics 11~(1) (1983) 95--103.
\newblock \href {https://doi.org/10.1214/aos/1176346060}
  {\path{doi:10.1214/aos/1176346060}}.

\bibitem{cappe2009online}
O.~Capp{\'e}, {\'E}.~Moulines, On-line expectation--maximization algorithm for
  latent data models, Journal of the Royal Statistical Society, Series B 71~(3)
  (2009) 593--613.
\newblock \href {https://doi.org/10.1111/j.1467-9868.2009.00698.x}
  {\path{doi:10.1111/j.1467-9868.2009.00698.x}}.

\bibitem{godahewa2021monash}
R.~Godahewa, C.~Bergmeir, G.~I. Webb, R.~J. Hyndman, P.~Montero-Manso, Monash
  time series forecasting archive, in: Proceedings of the Neural Information
  Processing Systems Track on Datasets and Benchmarks, 2021.
\newblock \href {https://doi.org/10.48550/arXiv.2105.06643}
  {\path{doi:10.48550/arXiv.2105.06643}}.

\bibitem{ni2024mole}
R.~Ni, Z.~Lin, S.~Wang, G.~Fanti, Mixture-of-linear-experts for long-term time
  series forecasting, in: International Conference on Artificial Intelligence
  and Statistics (AISTATS), 2024.
\newblock \href {https://doi.org/10.48550/arXiv.2312.06786}
  {\path{doi:10.48550/arXiv.2312.06786}}.

\bibitem{robbins1971convergence}
H.~Robbins, D.~Siegmund, A convergence theorem for non negative almost
  supermartingales and some applications, in: Optimizing Methods in Statistics,
  Academic Press, 1971, pp. 233--257.
\newblock \href {https://doi.org/10.1007/978-1-4612-5110-1_10}
  {\path{doi:10.1007/978-1-4612-5110-1_10}}.

\end{thebibliography}

\end{document}